\documentclass[11pt]{article}
\usepackage[utf8]{inputenc}
\usepackage{amsgen,amsmath,amstext,amsbsy,amsthm,amsopn,amssymb,bbm,listings,float}
\usepackage{fullpage}

\usepackage{graphicx}
\usepackage{tcolorbox}
\usepackage[font={footnotesize}]{caption}
\usepackage[english]{babel}
\usepackage{amsthm}

\usepackage{tikz}

\begingroup
\makeatletter
   \@for\theoremstyle:=definition,remark,plain\do{%
     \expandafter\g@addto@macro\csname th@\theoremstyle\endcsname{%
        \addtolength\thm@preskip\parskip
     }%
   }
\endgroup

\usepackage[numbers,sort]{natbib}
\usepackage{algorithm}
\usepackage{algpseudocode}
\newtheorem{theorem}{Theorem}[section]
\newtheorem{corollary}{Corollary}[theorem]
\newtheorem{lemma}[theorem]{Lemma}

\newtheorem{definition}{Definition}[section]

\newtheorem{claim}[theorem]{Claim}

\usepackage{hyperref}
\usepackage{tikz}
\hypersetup{
    colorlinks=true,
    linkcolor=blue,
    filecolor=magenta,
    urlcolor=cyan,
    citecolor = blue
}

\urlstyle{same}

\newcommand{\eps}{\varepsilon}
\newcommand{\A}{\mathcal{A}}
\newcommand{\M}{\mathcal{M}}
\newcommand{\R}{\mathcal{R}}
\newcommand{\T}{\mathcal{T}}
\newcommand{\X}{\mathcal{X}}
\newcommand{\Y}{\mathcal{Y}}
\newcommand{\Z}{\mathcal{Z}}
\renewcommand{\P}[2]{\mathbb{P}_{#1}\left[ #2 \right]}
\newcommand{\E}[2]{\mathbb{E}_{#1}\left[ #2 \right]}

\newcommand{\Ber}[1]{\ensuremath{\mathsf{Ber}}\left( #1 \right)}

\newcommand{\trar}{\pi}

\def \epsilon {\varepsilon} 
\newcommand{\poly}[1]{\text{poly}\left(#1\right)}
\newcommand{\hl}{\mathcal{HL}}
\newcommand{\cc}{\mathrm{CC}}
\newcommand{\sac}{\mathrm{SC}}
\newcommand{\fiub}{\textsc{HLSolver}}
\newcommand{\siub}{\textsc{PCSolver}}

\newcommand{\pcp}{\mathcal{PC}}
\newcommand{\non}{\text{N}}
\newcommand{\seq}{\text{S}}
\newcommand{\full}{\text{F}}
\newcommand{\bsc}{\text{BSC}}
\newcommand{\sgn}{\text{sgn}}
\newcommand{\overbar}[1]{\mkern 1.5mu\overline{\mkern-1.5mu#1\mkern-1.5mu}\mkern 1.5mu}

\newif\ifcomment
\commenttrue

\ifcomment
\newcommand{\mj}[1]{\textcolor{magenta}{[MJ: #1]}}
\newcommand{\jm}[1]{\textcolor{cyan}{[JM: #1]}}
\newcommand{\ar}[1]{\textcolor{violet}{[AR: #1]}}
\else
\newcommand{\mj}[1]{}
\newcommand{\jm}[1]{}
\newcommand{\ar}[1]{}
\fi

\title{Exponential Separations in Local Differential Privacy}
\author{Matthew Joseph \thanks{University of Pennsylvania, Computer and Information Science. \href{mailto:majos@cis.upenn.edu}{\texttt{majos@cis.upenn.edu}}} \and Jieming Mao \thanks{Google Research New York. This work done at the Warren Center, University of Pennsylvania. \href{mailto:maojm@google.com}{\texttt{maojm@google.com}}} \and Aaron Roth \thanks{University of Pennsylvania, Computer and Information Science. \href{mailto:aaroth@cis.upenn.edu}{\texttt{aaroth@cis.upenn.edu}}}}

\begin{document}

\maketitle
\begin{abstract}
	We prove a general connection between the \emph{communication} complexity of two-player games and the \emph{sample} complexity of their multi-player locally private analogues. We use this connection to prove sample complexity lower bounds for locally differentially private protocols as straightforward corollaries of results from communication complexity. In particular, we 1) use a communication lower bound for the hidden layers problem to prove an exponential sample complexity separation between sequentially and fully interactive locally private protocols, and 2) use a communication lower bound for the pointer chasing problem to prove an exponential sample complexity separation between $k$-round and $(k+1)$-round sequentially interactive locally private protocols, for every $k$.
\end{abstract}
\thispagestyle{empty} \setcounter{page}{0}
\clearpage

\section{Introduction}
\label{sec:intro}
A \emph{differentially private}~\cite{DMNS06} algorithm must induce similar output distributions given similar input databases. Intuitively, this similarity creates enough uncertainty to hide the presence or absence --- and hence ensure the privacy --- of any one individual in the dataset. Differential privacy's mathematical and practical appeal has driven the creation of a large body of work (see~\citet{DR14} and ~\citet{V17} for surveys) and increasing industrial and governmental adoption~\cite{A17, EPK14, BEMMR+17, DKY17, A16}. Here, we focus on the strictly stronger notion of \emph{local} differential privacy \cite{DMNS06, BNO08, KLNRS11}. In local differential privacy, the rows of the size-$n$ database are distributed among $n$ individual users, and the private computation consists of a public interaction between the users that must reveal little about any single datum.

Local differential privacy has been studied in three increasingly general models. In \emph{noninteractive} protocols, all users publish their messages simultaneously\footnote{In this paper, we will still allow some coordination in the form of shared randomness, which only makes our lower bounds stronger.}. In \emph{sequentially interactive}~\cite{DJW13} protocols, users publish exactly one message each in sequence, and messages may depend on previously-published messages. The number of rounds in such protocols is necessarily upper bounded by the number of users, $n$, but can be fewer. \emph{Fully interactive} protocols have no such restrictions. Each user may publish arbitrarily many messages with arbitrary dependencies on other previous messages, and there is no upper bound on the number of rounds. In this paper, we study the relative power of these models.

A natural way to approach this question is through the lens of sample complexity: fixing one statistical estimation problem, how many users $n$ (with data drawn i.i.d. from a fixed but unknown distribution) are necessary for a protocol to solve the problem in each model? It is known that there are problems that require exponentially larger sample complexity in noninteractive protocols than sequentially interactive protocols \cite{KLNRS11}. Before our work, however, only polynomial separations were known between sequentially and fully interactive protocols \cite{JMNR19}, and it was conjectured that the power of these two models was indeed polynomially related.

\subsection{Our Contributions}
First, we prove a general equivalence between the \emph{communication complexity} of any two-player problem and the \emph{sample complexity} of a sequentially interactive locally private multi-player analogue of that problem. Informally, we show that the ``noise'' that must be added to ensure local differential privacy in any sequentially interactive protocol makes it possible to convert it into a two-party protocol over a noisy channel, and vice-versa. In combination with past work relating communication complexity over noisy and noiseless channels~\cite{S96, BR14, BM15}, this conversion extends to noiseless channels as well.

\begin{theorem}[Informal version of Theorem~\ref{thm:eq}]
	Let $P_2$ be a two-player communication problem, let $P_m$ be its multi-player analogue, let $\cc_{\gamma}(P_2)$ be the communication complexity of solving $P_2$ with error $\gamma$, and let $\sac_{\gamma}^{\eps,S}(P_m)$ be the sample complexity of solving $P_m$ with error $\gamma$ using a  sequentially interactive $\eps$-locally private protocol. Then for any $\eps = O(1)$ and $0 < \gamma, \eta = \Omega(1)$ such that $\gamma + \eta < 1$, $\sac_\gamma^{\eps,S}(P_m) = \Theta\left(\tfrac{1}{\eps^2} \cdot \cc_{\gamma + \eta}(P_2)\right)$.
\end{theorem}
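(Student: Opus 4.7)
The plan is to prove both directions of the claimed equivalence separately, with the additive slack $\eta$ absorbing simulation error in each direction. The unifying intuition is that each $\eps$-LDP message carries $\Theta(\eps^2)$ bits of information, so $n$ locally private messages should be interconvertible with $\Theta(n\eps^2)$ bits of noiseless two-party communication.

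For the \textbf{upper bound direction} $\sac^{\eps,S}_{\gamma+\eta}(P_m) \leq O(\cc_\gamma(P_2)/\eps^2)$, I would start from a two-player protocol $\Pi$ for $P_2$ using $c = \cc_\gamma(P_2)$ bits and construct a sequentially interactive $\eps$-LDP protocol $\Pi'$ for $P_m$ with $n = O(c/\eps^2)$ users. The idea is to simulate each bit of $\Pi$ by a block of $k = \Theta(1/\eps^2)$ consecutive users, each running $\eps$-randomized response on the bit it ought to transmit (obtained from $\Pi$'s next-message function applied to the transcript so far and that user's private sample, routed to Alice's or Bob's side according to the sampling structure of $P_m$). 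Majority vote over the block recovers the bit with per-bit failure probability $\exp(-\Omega(k\eps^2))$ by a Chernoff bound; choosing constants gives per-bit error at most $\eta/c$, and a union bound over the $c$ simulated bits yields total simulation error $\eta$.

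For the \textbf{lower bound direction} $\sac^{\eps,S}_\gamma(P_m) \geq \Omega(\cc_{\gamma+\eta}(P_2)/\eps^2)$, I would take a sequentially interactive $\eps$-LDP protocol $\Pi'$ for $P_m$ using $n$ users and build a two-party protocol $\Pi$ for $P_2$ that uses $O(n\eps^2)$ bits. Alice and Bob partition the users according to whether the user's sample distribution depends on $x$ or $y$ in the multi-player embedding of $P_2$; they then simulate $\Pi'$ sequentially, with the owning player drawing the user's sample, running that user's $\eps$-LDP mechanism, and transmitting the output so that both parties can update the transcript. The naive transmission of $O(\log|\Y|)$ bits per user is too expensive, so here I would invoke the channel-simulation / noisy-communication machinery already cited in the paper~\cite{S96,BR14,BM15}. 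The quantitative hook is that any $\eps$-LDP mechanism's conditional output distribution is within $O(\eps^2)$ KL divergence of the marginal output distribution (which is data independent), so correlated / rejection sampling against a shared reference compresses each round to $O(\eps^2)$ expected bits, totalling $O(n\eps^2)$ expected bits and, after a Markov truncation paying $\eta$ additional error, $O(n\eps^2)$ worst-case bits.

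The \textbf{main obstacle} will be the lower bound. The compressed simulation must reproduce the exact joint distribution of the $\Pi'$-transcript, not just its per-round marginals, because every subsequent LDP mechanism depends on the transcript, and any drift compounds multiplicatively across $n$ rounds. Making the $O(\eps^2)$-bits-per-round bound hold with high probability (as opposed to just in expectation) over a long, adaptively chosen interaction is precisely where the sophisticated interactive channel-simulation tools of Braverman--Rao and Braverman--Moitra become essential. A minor but nontrivial bookkeeping task is to thread the error parameters so that the single slack $\eta$ absorbs the simulation loss in both directions simultaneously, yielding the unified $\Theta(\cdot)$ statement.
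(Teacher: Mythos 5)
Your overall intuition---each $\eps$-LDP message carries $\Theta(\eps^2)$ bits of information, so $n$ messages should interconvert with $\Theta(n\eps^2)$ bits of noiseless communication---matches the paper, and your lower-bound sketch is conceptually aligned with the paper's route (the paper reduces to one-bit-per-user protocols via Bassily--Smith, observes that each user's bit is exactly a binary symmetric channel with crossover $\tfrac{1}{2}-\Theta(\eps)$, and then invokes Braverman--Mao to compress BSC communication by an $\eps^2$ factor). However, there is a \textbf{genuine gap in your upper-bound direction}. You propose simulating each of the $c = \cc_\gamma(P_2)$ noiseless bits by a block of $k = \Theta(1/\eps^2)$ users running randomized response and taking a majority. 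With $k\eps^2 = \Theta(1)$, the per-bit failure probability $\exp(-\Omega(k\eps^2))$ is a fixed constant, not $\eta/c$; to drive it down to $\eta/c$ you need $k = \Theta(\log(c/\eta)/\eps^2)$, and your total sample complexity becomes $O(c\log c/\eps^2)$, not $O(c/\eps^2)$. Removing this logarithmic loss is exactly what interactive coding over noisy channels buys you, and the paper uses it: Braverman--Rao (Theorem 2, stated as Lemma~\ref{lem:br14}) simulates any noiseless protocol over a BSC with crossover probability bounded away from $1/8$ with only a \emph{constant} blowup; a one-shot majority amplification then lowers the crossover from $\tfrac{1}{8}-c$ to $\tfrac{1}{2}-\Theta(\eps)$ with a $\Theta(1/\eps^2)$ blowup (needing only constant per-bit error, so no log); and finally a single randomized-response user per BSC bit (with ``Alice-type'' users applying RR to their bit and ``Bob-type'' users outputting a uniform bit) exactly realizes that BSC. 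You invoked the interactive-coding machinery only for the lower-bound direction, but it is equally indispensable for the upper bound, which is where your write-up silently loses the factor. A secondary, smaller point: the paper puts the $\eta$ slack on the communication-complexity side ($\cc_{\gamma+\eta}$) and obtains the upper bound $\sac_\gamma = O(\cc_\gamma/\eps^2)$ with no slack at all; your indexing $\sac_{\gamma+\eta}\le O(\cc_\gamma/\eps^2)$ is not quite the statement being proved.
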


This connection enables us to translate existing communication lower bounds into sample complexity lower bounds for locally private protocols. For example, we show an exponential separation in sample complexity between sequentially and fully interactive locally private protocols for the \emph{hidden layers} problem.

\begin{theorem}[Informal version of Corollary~\ref{cor:lb} and Theorem~\ref{thm:fi_ub}]
	Let $k$ be a natural number parameterizing the hidden layers problem and $\eps = O(1)$. Any sequentially interactive $\eps$-locally private protocol solving the hidden layers problem with constant probability requires $\Omega\left(\tfrac{2^k}{\eps^2}\right)$ samples. However, there exists a fully interactive $\eps$-locally private protocol that solves the hidden layers problem with constant success probability and $O\left(\tfrac{k}{\eps^2}\right)$ samples.
\end{theorem}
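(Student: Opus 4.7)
I would derive the $\Omega(2^k/\eps^2)$ lower bound as an immediate corollary of the equivalence theorem (Theorem~\ref{thm:eq}). The hidden layers problem has a natural two-player formulation $P_2$ whose randomized communication complexity is known from prior work to satisfy $\cc_\gamma(P_2) = \Omega(2^k)$ for any constant $\gamma < 1/2$---this exponential-in-depth lower bound is precisely what gives the problem its name. Defining $P_m$ to be the multi-player LDP analogue of $P_2$ (in which the bits of Alice's and Bob's inputs are distributed across the $n$ users in the natural way required by Theorem~\ref{thm:eq}), the equivalence theorem with $\eps = O(1)$ and any two constants $\gamma, \eta$ with $\gamma + \eta < 1$ directly gives
\[
\sac_\gamma^{\eps,S}(P_m) \;=\; \Theta\!\left(\tfrac{1}{\eps^2}\cdot \cc_{\gamma+\eta}(P_2)\right) \;=\; \Omega\!\left(\tfrac{2^k}{\eps^2}\right).
\]
The only real work here is bookkeeping: matching the two-player problem in the cited communication lower bound to the problem $P_2$ used by Theorem~\ref{thm:eq}, and checking that the error-parameter regime of the equivalence is compatible with the constants we choose.

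\textbf{Upper bound plan.} For the fully interactive protocol I would exploit multi-round interaction to ``peel'' one layer at a time. Partition the $n$ users into $k$ disjoint batches of size $n_0 = O(1/\eps^2)$ and devote batch $i$ to round $i$. In round $i$, having observed the transcript of rounds $1,\dots,i-1$ (this is exactly where fully interactive scheduling, rather than sequential scheduling, is used), each user in batch $i$ is asked a single bit-valued query whose answer identifies the piece of layer $i$ that depends on the previously recovered layers, and responds via $\eps$-randomized response. Averaging the $n_0$ noisy bits and thresholding recovers the correct answer with constant probability by the standard LDP Chernoff argument for $\eps$-randomized response (variance $O(1/(\eps^2 n_0))$). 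Chaining $k$ such rounds produces the final answer using $k \cdot n_0 = O(k/\eps^2)$ users in total, and each user speaks only once under $\eps$-LDP so the overall protocol is $\eps$-locally private.

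\textbf{Main obstacle.} The crux of the upper bound is preventing per-round error from blowing up after $k$ rounds: a naive union bound over constant per-round failure would give error $1 - (1-\gamma)^k$. I expect the resolution to rely on a structural feature of the hidden layers problem---either the problem is robust to a constant fraction of layer-level mistakes (so constant per-round error amplifies only to another constant on the final answer) or, alternatively, a small verification step piggy-backed into each round's $O(1/\eps^2)$ samples prevents errors from cascading. Pinning down which mechanism makes the per-round budget $O(1/\eps^2)$ rather than $O(\log(k)/\eps^2)$ is where the bulk of the technical work lies; the lower bound, by contrast, is effectively a plug-in from Theorem~\ref{thm:eq}.
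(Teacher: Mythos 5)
Your lower bound plan is the paper's own argument: Corollary~\ref{cor:lb} is exactly Theorem~\ref{thm:eq} plugged into the communication lower bound of \citet{GKR16} (Lemma~\ref{lem:hlp_lb}), and the error-regime bookkeeping you flag is harmless because that lower bound holds even at error $1-2^{-k}$, not just below $1/2$.

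The upper bound plan, however, has a genuine gap --- in fact the approach cannot work as described. First, a protocol in which users are split into disjoint batches and \emph{each user speaks only once} is sequentially interactive, so if your scheme succeeded with $O(k/\eps^2)$ samples it would contradict the very lower bound you just derived; as the paper notes via \citet{JMNR19}, any protocol witnessing this separation must make exponentially many randomizer calls \emph{per user}, so ``each user speaks once'' is precisely what you must give up. Second, the plan mismodels the problem: there are not $k$ layers to peel. The tree in $\hl(k)$ is $2^{4k}$-ary with $2^{rs}+1$ levels where $r=2^{2^{8k}}$ and $s=2^{8k}$ (triply exponential in $k$), and each user knows the labelling of only \emph{one} hidden level whose index the protocol does not know; at all other levels there is nothing to recover and no user has information about them. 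So ``devote batch $i$ to layer $i$'' is not meaningful, and you cannot afford fresh users per level of the walk. The paper's protocol \fiub{} (Theorem~\ref{thm:fi_ub}) instead walks the entire root-to-leaf path, querying \emph{all} $n$ users about every candidate child edge at every level. Privacy does not accumulate over these triply exponentially many queries because each user's response distribution deviates from the all-``no'' baseline $\Ber{1/(e^{\eps/2}+1)}$ at most once (when the query hits their hidden level and their labelled edge), so comparing any two inputs changes at most two response distributions, giving $\eps$-LDP in total. Accuracy only needs to hold at the two hidden levels, and a union bound over the $2^{O(k)}$ child-queries made there is what yields $n=O(k/\eps^2)$. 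Your ``main obstacle'' (error cascading over $k$ rounds, resolved by robustness or verification) is therefore not the crux at all; the crux is reusing the same users across a triply exponential number of queries while keeping the cumulative privacy loss at $\eps$, which your one-shot batching forecloses.
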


Theorem \ref{thm:eq} provides a tight equivalence between the sample complexity of locally private sequentially interactive protocols, and communication complexity --- but the reduction does not preserve the number of rounds of interaction needed. We prove a different variant of Theorem \ref{thm:eq} that preserves round complexity at the expense of a looser dependence on $\eps$. Using this connection and round-specific communication lower bounds for the \emph{pointer chasing} problem, we show that for every $k$, there is an exponential separation between $(k-1)$- and $k$- round sequentially interactive protocols.

\begin{theorem}[Informal version of Corollary~\ref{cor:pc_ni_lb} and Theorem~\ref{thm:si_ub}]
	Let $k$ and $\ell$ be natural numbers parameterizing the pointer chasing problem. Any $(k-1)$-round sequentially interactive $\eps$-locally private protocol solving the pointer chasing problem with constant probability requires $\Omega\left(\tfrac{\ell}{e^\eps k^2}\right)$ samples. However, there exists a $k$-round sequentially interactive $\eps$-locally private protocol that solves pointer chasing with constant success probability and $\tilde O\left(\tfrac{k\log(\ell)}{\eps^2}\right)$ samples, where $\tilde O(\cdot)$ hides logarithmic factors.
\end{theorem}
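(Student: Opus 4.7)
The plan is to prove the two halves by independent but complementary routes: the lower bound goes through the round-preserving variant of Theorem~\ref{thm:eq} combined with the classical Nisan--Wigderson pointer chasing lower bound, and the upper bound is a direct construction that mimics the natural $k$-round two-party pointer chasing strategy.

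For the lower bound, I would first invoke the round-preserving version of Theorem~\ref{thm:eq} (the one the authors announce immediately before stating this theorem, carrying a looser $e^\eps$ dependence). Given any $(k-1)$-round sequentially interactive $\eps$-locally private protocol $\Pi$ for the multi-player pointer chasing problem $P_m$ with $n$ samples, this reduction produces a $(k-1)$-round two-party communication protocol for $P_2$ whose total communication is $O(e^\eps \cdot k^2 \cdot n / \ell)$ \emph{assuming} $\Pi$ succeeds with constant probability; more precisely, the reduction should convert $n$ private messages into $O(e^\eps \cdot n)$ bits of communication while preserving the number of rounds. I would then appeal to the Nisan--Wigderson lower bound stating that any $(k-1)$-round two-party protocol solving $k$-step pointer chasing on $[\ell]$ with constant error requires $\Omega(\ell/k)$ communication (or $\Omega(\ell/k^2)$ with a looser counting). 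Rearranging yields $n = \Omega(\ell/(e^\eps k^2))$. I would need to be careful to instantiate the multi-player version of $P_m$ so that a random user's view aligns with a random slot in either Alice's or Bob's function $f_A, f_B : [\ell] \to [\ell]$; the success criterion used on the private side must match the communication problem's output (e.g.\ the parity of the $k$-th pointer).

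For the upper bound, I would give a round-by-round protocol. Fix the starting pointer $a_0$, known publicly. In round $i \in \{1,\dots,k\}$, all parties know $a_{i-1}$; the unknown quantity is $a_i = f(a_{i-1})$ for whichever function $f \in \{f_A,f_B\}$ is active in that round. I allocate a fresh batch of $m = \Theta(\log(\ell k)/\eps^2)$ users to round $i$, each of whom holds (a sample of) the relevant function's value at $a_{i-1}$ — i.e., each batch user can report a label in $[\ell]$. Using any standard $\eps$-LDP mechanism for identifying the majority symbol in a finite alphabet of size $\ell$ (for instance, Hadamard response or generalized randomized response followed by empirical-argmax debiasing), $m = O(\log(\ell k)/\eps^2)$ samples suffice to output the correct label with probability at least $1 - 1/(3k)$. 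Each user speaks exactly once and only after the previous round's outcome is announced, so the protocol is sequentially interactive and $k$-round. A union bound across the $k$ rounds yields overall constant success probability, with total sample complexity $k \cdot m = \tilde O(k \log(\ell)/\eps^2)$.

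The main obstacle I anticipate is on the lower-bound side: making the round-preserving reduction yield a communication protocol whose round count is \emph{exactly} $k-1$ (rather than $k-1$ with an additive slack) and whose communication is linear in $n$ with only the $e^\eps$ factor appearing — this is what forces the particular $1/(e^\eps k^2)$ scaling. Verifying that the reduction used for Theorem~\ref{thm:eq} can be sharpened to preserve rounds exactly, so that the $(k-1)$-round hardness of pointer chasing applies without having to pay an extra round, will be the delicate point; once that is in hand, the Nisan--Wigderson bound finishes the argument. On the upper-bound side the only subtlety is ensuring the sequentially interactive constraint — each user speaks exactly once — which is achieved automatically by pre-partitioning users by round.
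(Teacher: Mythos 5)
Your overall plan coincides with the paper's: the lower bound goes through the round-preserving reduction (Theorem~\ref{thm:red}, built on Lemma~\ref{lem:one_bit}) followed by the Nisan--Wigderson bound, and the upper bound is a round-by-round randomized-response protocol with fresh user batches per round. However, on the lower-bound side you have misdiagnosed the delicate point, and the step you would actually need is missing. The reduction of Theorem~\ref{thm:red} does preserve the round count exactly, so ``additive slack'' is not the issue; the issue is that each of its rounds is a \emph{simultaneous-message} round (the users active in a round are randomly split between Alice and Bob, who then both speak in that round), whereas Lemma~\ref{lem:pc_lb} is stated for the standard \emph{alternating} model --- specifically, for $k$-round protocols in which Bob speaks first. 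You cannot directly feed a $(k-1)$-round simultaneous protocol into a ``$(k-1)$-round'' communication lower bound: a simultaneous round is strictly stronger than a single alternating message, and for pointer chasing the direction of who reacts to whom is exactly what the bound is about. The paper closes this gap with Lemma~\ref{lem:diff_rounds}: a $(k-1)$-round simultaneous protocol is simulated by a $k$-round alternating protocol with Bob speaking first (Bob sends $B_1$, Alice sends $A_1,A_2$, Bob sends $B_2,B_3$, and so on), and then the Bob-first $k$-round form of the Nisan--Wigderson bound gives $\Omega(\ell/k^2)$, hence $\sac(\A)=\Omega\left(\tfrac{\ell}{e^\eps k^2}\right)$. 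Without this conversion (or an explicit argument that the NW bound holds in the simultaneous model), your chain of inequalities is not justified. Also, your intermediate expression ``$O(e^\eps\cdot k^2\cdot n/\ell)$'' for the communication is not what the reduction gives; the correct statement is $\cc(\A')=\tilde O(e^\eps\cdot\sac(\A))$, with the $k^2$ and $\ell$ entering only when you invert the NW bound, and the output of $\pcp(k,\ell)$ in the paper is the $k$-th pointer value itself, not its parity.

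On the upper bound your construction is essentially the paper's \siub{} (the paper resolves each pointer bit by bit with randomized response, you propose an alphabet-$[\ell]$ frequency oracle; both give $\tilde O(k\log(\ell)/\eps^2)$ and both are $k$-round since all queries within a pointer step depend only on the previous step's outcome). One modeling slip to fix: in the multi-party model each user holds Alice's \emph{entire} input or Bob's entire input, each with probability $1/2$, so in any given round only about half of your batch holds ``the relevant function's value at $a_{i-1}$''; the others must respond with a dummy/null report, and your argmax estimator must be calibrated to a signal of mass roughly $1/2$ rather than $1$ (this is the role of the fixed thresholds in the paper's Theorem~\ref{thm:si_ub} analysis via Claim~\ref{claim:hist}). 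This is a constant-factor correction and does not change the stated sample complexity.
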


The special case of $k=2$ provides another exponential separation between noninteractive and sequentially interactive protocols (such a separation was already known, for a different problem using a custom analysis of \citet{KLNRS11}). For $k>2$, the result is new and demonstrates that there is an infinite hierarchy of sequentially interactive protocols, characterized by their number of rounds of interactivity, and that there is an exponential separation between each level of the hierarchy.

Finally, for simplicity we state all of our results for pure $\eps$-local privacy. However, for reasonable values of $\delta$ (roughly $\delta = o\left(\tfrac{\eps}{n\log(n)}\right)$) they easily extend to $(\eps,\delta)$-local privacy using the approximate-to-pure transformation described by Bun et al.~\cite{BNS18} and Cheu et al.~\cite{CSUZZ18}.

\subsection{Related Work}
\label{subsec:rel}
\citet{KLNRS11} first separated noninteractive from sequentially interactive locally private protocols. They exhibited a learning problem, masked parity, for which a sequentially interactive locally private protocol requires only $O(\poly{d})$ samples while any noninteractive locally private protocol requires $2^{\Omega(d)}$ samples. Their result relies on an (up to polynomial terms) equivalence between the number of local randomizer calls made by a locally private protocol and the query complexity of a statistical query (SQ) algorithm. For sequentially interactive protocols, the number of local randomizer calls is equal to the sample complexity. However, this is no longer true for fully interactive protocols, and so their  equivalence between the sample complexity of locally private protocols and the query complexity of SQ algorithms fails to extend beyond sequential interactivity.~\citet{DF18} also leveraged this connection to SQ learning to prove similar separations for a different, larger class of problems.

Our results do not rely on this connection to the SQ model. In fact, our exponential separation between the sequential and fully interactive models implies that the connection fails to extend beyond sequential interactivity: there are problems that locally private protocols can solve with polynomial sample complexity, but SQ protocols require exponential query complexity. This re-focuses attention on an important problem that remains open: what can be learned privately in the local model of computation, if full interactivity is allowed?

~\citet{DJW13} first distinguished sequential and full interactivity and developed several lower bound techniques that applied to noninteractive and sequentially interactive protocols.~\citet{DR19} extended some of these techniques to full interactivity, thus establishing the first lower bounds for fully interactive locally private protocols that do not also extend to the centralized model of privacy. However, their results do not separate sequentially interactive protocols from fully interactive protocols.

~\citet{JMNR19} characterized the relationship between the sample complexity of fully interactive and sequentially interactive locally private protocols in terms of a parameter called ``compositionality'', providing the first separations between full and sequential interactivity. While this characterization is tight in terms of the compositionality parameter, their lower bound viewed as a function of $n$ only shows a sample complexity gap on the order of $\Omega(\sqrt{n})$ between the two models. We prove a separation that is exponential in $n$.

Several other works have also examined interactivity in local differential privacy.~\citet{STU17} showed that convex optimization protocols relying on neighborhood-based oracles (and, in particular, locally private protocols relying on neighborhood-based oracles) require interaction. However, their results are qualitatively different, as they also hold absent any privacy restriction.~\citet{ACFT19} showed polynomial separations between noninteractive private- and public-randomness protocols for identity testing (informally, in their formulation public randomness is necessary to coordinate the randomizers employed by different users). We note that our framework assumes public randomness for all noninteractive protocols; this only strengthens our separation.

Finally, our work is not the first to connect differential privacy and two-player communication games.~\citet{MMPRTV10} introduced and studied two-party differential privacy. They showed, roughly, that a two-party differentially private protocol is equivalent to a protocol with low round and communication complexity. This equivalence relies on work relating the compressibility of a protocol to its information cost. In contrast, we work in the more commonly studied $n$-party model of local differential privacy, and our reductions rely on different techniques.

\subsection{Organization}
Preliminaries from communication complexity and differential privacy appear in Section~\ref{sec:prelims}. We prove the equivalence between two-party communication complexity and multi-party locally private sample complexity in Section~\ref{sec:eq_red} and use this equivalence to separate sequential and full interactivity in Section~\ref{sec:si_fi}. A looser round complexity-preserving version of this connection appears in Section~\ref{sec:round_specific}, and we apply this second connection to show a round-specific separation in Section~\ref{sec:round_sep}. 

\section{Preliminaries}
\label{sec:prelims}
We start by recalling preliminaries from two-party communication complexity before moving on to differential privacy and (our notion of) multi-party communication complexity.

\subsection{Two-party Setting}
\label{subsec:two_party}
Many of our results feature the \emph{two-party communication model}.

\begin{definition}
	In the \emph{two-party communication model}, one player Alice receives input $x \in \X$, and the other player Bob receives input $y \in \Y$. Alice and Bob want to jointly compute some output $z \in \Z$ such that $(x,y,z)$ satisfies some relation $\R \subset \X \times \Y \times \Z$.
\end{definition}

To compute $z$, Alice and Bob coordinate their actions using a \emph{protocol}.

\begin{definition}
	Given a two-party communication model, a \emph{protocol} $\A$ specifies a binary output function that each player should apply to their data at each time step, as a function of the time step, the \emph{transcript} of previously released values and any shared randomness.
\end{definition}

Note that, by our inclusion of shared randomness, all protocols we consider are ``public-coin''. Two salient protocol characteristics are the number of bits that must be exchanged and the likelihood of a ``good'' outcome.

\begin{definition}
	The \emph{communication complexity} of a protocol $\A$, denoted by $\cc(\A)$, is the maximum number of bits exchanged over all inputs $(x,y)$ and all random coins. If on all inputs $x$ and $y$ $\P{\A}{(x,y,z) \not \in \mathcal{R}} \leq \gamma$, we say $\A$ \emph{computes $\R$ with error at most $\gamma$}. The \emph{randomized communication complexity with error $\gamma$} of a relation $\R$ is then defined as $$\cc_{\gamma} (\R) = \min_{\A:\A \text{ computes } \R \text{ with error at most } \gamma} \cc(\A).$$
\end{definition}

\subsection{Differential Privacy}
\label{subsec:dp}
As originally formulated by~\citet{DMNS06}, we say an algorithm $\M$ satisfies differential privacy if its output distribution is Lipschitz continuous with respect to a neighboring relationship on inputs:

\begin{definition}
	Given data universe $\X$, two datasets $S, S' \in \X^n$ are \emph{neighbors} if they differ in at most one coordinate. Given $\eps, \delta \geq 0$, a randomized algorithm $\M \colon \X^n \to \mathcal{O}$ is $(\eps,\delta)$-differentially private if for every pair of neighboring databases $S$ and $S'$ and every event $\Omega \subset \mathcal{O}$ $$\P{\M}{\M(S) \in \Omega} \leq e^\eps \P{\M}{\M(S') \in \Omega} + \delta.$$ When $\delta = 0$, $\M$ satisfies \emph{pure} differential privacy. When $\delta > 0$, $\M$ satisfies \emph{approximate} differential privacy.
\end{definition}

Differential privacy constrains the output of an algorithm $\M$. However, there is no intermediary between $\M$ and the raw database, so $\M$ must be trusted as a curator of this data. Since $\M$ enjoys this ``central'' access to all data, the above model of differential privacy is sometimes called ``central'' differential privacy.

We focus on the more restrictive \emph{local} model of differential privacy~\cite{DMNS06, BNO08, KLNRS11}. In this model, $\M$ no longer enjoys privileged access to the entire database, and individuals need not trust any central curator. Instead, we view the size-$n$ database as distributed, row by row, among $n$ individuals. Any computation occurs on privatized outputs from each of the $n$ individuals, and the private computation becomes a public interaction between users. Accordingly, we imitate the interactive transcript-based framework of~\citet{JMNR19} and view the computation as an interaction between the $n$ individuals that is coordinated by a protocol $\A$. In each round of this interaction, the protocol $\A$ observes the transcript of interactions so far and selects a set of users and \emph{randomizers}.

\begin{definition}
	An \emph{$(\eps,\delta)$-randomizer} $R \colon \X \to \mathcal{O}$ is an $(\eps,\delta)$-differentially private function taking a single data point as input.
\end{definition}

The users apply the assigned randomizers to their input and publish the output. $\A$ then observes the updated transcript, selects a new users-randomizers pair, and the interaction continues.

\begin{definition}
	A \emph{transcript} $\trar$ is a vector consisting of 5-tuples $(S_U^t, S_R^t, S_\eps^t, S_\delta^t, S_Y^t)$ --- encoding the set of users chosen, set of randomizers assigned, set of randomizer privacy parameters, and set of randomized outputs produced --- for each round $t$. $\trar_{<t}$ denotes the transcript prefix before round $t$. Letting $S_\pi$ denote the collection of all transcripts and $S_R$ the collection of all randomizers, a \emph{protocol} is a function $\A \colon S_\pi \to \left(2^{[n]} \times 2^{S_R} \times 2^{\mathbb{R}_{\geq 0}} \times 2^{\mathbb{R}_{\geq 0}}\right) \cup \{\perp\}$ mapping transcripts to sets of users, randomizers, and randomizer privacy parameters ($\perp$ is a special character indicating a protocol halt). The length of the transcript, as indexed by $t$, is its \emph{round complexity}.
\end{definition}

This framework leads to a simple definition of local differential privacy: the transcript of a locally differentially private interaction is differentially private in the user inputs.

\begin{definition}
\label{def:local_dp}
	Given $\eps, \delta \geq 0$, randomized protocol $\A$ on (distributed) database $S$ satisfies $(\eps,\delta)$-local differential privacy if the transcript it generates is an $(\eps,\delta)$-differentially private function of $S$.
\end{definition}

For brevity, we often shorthand ``differentially private'' as ``private''. In general, we will distinguish between three modes of interactivity in locally private protocols. In \emph{noninteractive} protocols, users output their communications in a single simultaneous round of communication. In \emph{sequentially interactive} protocols, interaction occurs in a sequence: users may base their communication on previous messages, but only speak once. Lastly, communication in \emph{fully interactive protocols} may depend on previous communications, and users may also communicate arbitrarily many times. Illustrations of these models appear in Figure~\ref{fig:interaction_models}.

\vspace{10pt}

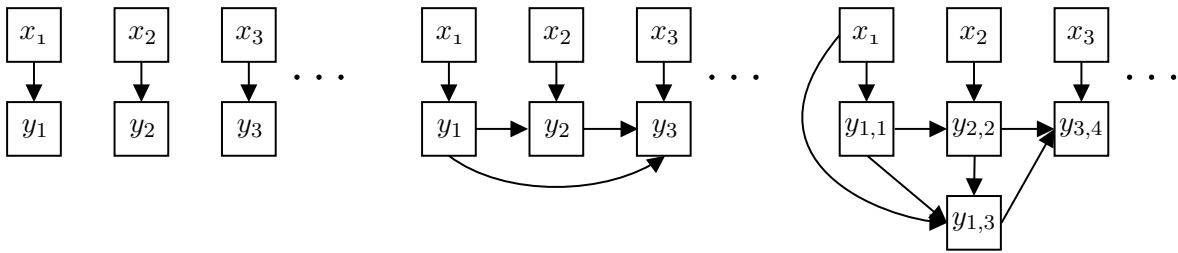
\begin{figure}[h]
\centering

\tikzset{every picture/.style={line width=0.75pt}} 

\begin{tikzpicture}[x=0.75pt,y=0.75pt,yscale=-1,xscale=1]

\draw   (10,30.09) -- (37.09,30.09) -- (37.09,57.18) -- (10,57.18) -- cycle ;
\draw   (64.18,30.09) -- (91.27,30.09) -- (91.27,57.18) -- (64.18,57.18) -- cycle ;
\draw   (118.36,30.09) -- (145.46,30.09) -- (145.46,57.18) -- (118.36,57.18) -- cycle ;
\draw    (23.55,57.18) -- (23.55,75.5) ;
\draw [shift={(23.55,77.5)}, rotate = 270] [fill={rgb, 255:red, 0; green, 0; blue, 0 }  ][line width=0.75]  [draw opacity=0] (8.93,-4.29) -- (0,0) -- (8.93,4.29) -- cycle    ;

\draw    (77.73,57.18) -- (77.73,75.5) ;
\draw [shift={(77.73,77.5)}, rotate = 270] [fill={rgb, 255:red, 0; green, 0; blue, 0 }  ][line width=0.75]  [draw opacity=0] (8.93,-4.29) -- (0,0) -- (8.93,4.29) -- cycle    ;

\draw    (131.91,57.18) -- (131.91,75.5) ;
\draw [shift={(131.91,77.5)}, rotate = 270] [fill={rgb, 255:red, 0; green, 0; blue, 0 }  ][line width=0.75]  [draw opacity=0] (8.93,-4.29) -- (0,0) -- (8.93,4.29) -- cycle    ;

\draw   (10,77.5) -- (37.09,77.5) -- (37.09,104.59) -- (10,104.59) -- cycle ;
\draw   (64.18,77.5) -- (91.27,77.5) -- (91.27,104.59) -- (64.18,104.59) -- cycle ;
\draw   (118.36,77.5) -- (145.46,77.5) -- (145.46,104.59) -- (118.36,104.59) -- cycle ;
\draw   (219.28,30.09) -- (246.37,30.09) -- (246.37,57.18) -- (219.28,57.18) -- cycle ;
\draw   (273.46,30.09) -- (300.55,30.09) -- (300.55,57.18) -- (273.46,57.18) -- cycle ;
\draw   (327.64,30.09) -- (354.73,30.09) -- (354.73,57.18) -- (327.64,57.18) -- cycle ;
\draw    (232.82,57.18) -- (232.82,75.5) ;
\draw [shift={(232.82,77.5)}, rotate = 270] [fill={rgb, 255:red, 0; green, 0; blue, 0 }  ][line width=0.75]  [draw opacity=0] (8.93,-4.29) -- (0,0) -- (8.93,4.29) -- cycle    ;

\draw    (287.01,57.18) -- (287.01,75.5) ;
\draw [shift={(287.01,77.5)}, rotate = 270] [fill={rgb, 255:red, 0; green, 0; blue, 0 }  ][line width=0.75]  [draw opacity=0] (8.93,-4.29) -- (0,0) -- (8.93,4.29) -- cycle    ;

\draw    (341.19,57.18) -- (341.19,75.5) ;
\draw [shift={(341.19,77.5)}, rotate = 270] [fill={rgb, 255:red, 0; green, 0; blue, 0 }  ][line width=0.75]  [draw opacity=0] (8.93,-4.29) -- (0,0) -- (8.93,4.29) -- cycle    ;

\draw   (219.28,77.5) -- (246.37,77.5) -- (246.37,104.59) -- (219.28,104.59) -- cycle ;
\draw   (273.46,77.5) -- (300.55,77.5) -- (300.55,104.59) -- (273.46,104.59) -- cycle ;
\draw   (327.64,77.5) -- (354.73,77.5) -- (354.73,104.59) -- (327.64,104.59) -- cycle ;
\draw    (246.37,91.04) -- (270.78,91.04) ;
\draw [shift={(272.78,91.04)}, rotate = 180] [fill={rgb, 255:red, 0; green, 0; blue, 0 }  ][line width=0.75]  [draw opacity=0] (8.93,-4.29) -- (0,0) -- (8.93,4.29) -- cycle    ;

\draw    (300.55,91.04) -- (326.32,91.04) ;
\draw [shift={(328.32,91.04)}, rotate = 180] [fill={rgb, 255:red, 0; green, 0; blue, 0 }  ][line width=0.75]  [draw opacity=0] (8.93,-4.29) -- (0,0) -- (8.93,4.29) -- cycle    ;

\draw   (429.91,30.09) -- (457,30.09) -- (457,57.18) -- (429.91,57.18) -- cycle ;
\draw   (484.09,30.09) -- (511.19,30.09) -- (511.19,57.18) -- (484.09,57.18) -- cycle ;
\draw   (538.28,30.09) -- (565.37,30.09) -- (565.37,57.18) -- (538.28,57.18) -- cycle ;
\draw    (443.46,57.18) -- (443.46,75.5) ;
\draw [shift={(443.46,77.5)}, rotate = 270] [fill={rgb, 255:red, 0; green, 0; blue, 0 }  ][line width=0.75]  [draw opacity=0] (8.93,-4.29) -- (0,0) -- (8.93,4.29) -- cycle    ;

\draw    (497.64,57.18) -- (497.64,75.5) ;
\draw [shift={(497.64,77.5)}, rotate = 270] [fill={rgb, 255:red, 0; green, 0; blue, 0 }  ][line width=0.75]  [draw opacity=0] (8.93,-4.29) -- (0,0) -- (8.93,4.29) -- cycle    ;

\draw    (551.82,57.18) -- (551.82,75.5) ;
\draw [shift={(551.82,77.5)}, rotate = 270] [fill={rgb, 255:red, 0; green, 0; blue, 0 }  ][line width=0.75]  [draw opacity=0] (8.93,-4.29) -- (0,0) -- (8.93,4.29) -- cycle    ;

\draw   (429.91,77.5) -- (457,77.5) -- (457,104.59) -- (429.91,104.59) -- cycle ;
\draw   (484.09,77.5) -- (511.19,77.5) -- (511.19,104.59) -- (484.09,104.59) -- cycle ;
\draw   (538.28,77.5) -- (565.37,77.5) -- (565.37,104.59) -- (538.28,104.59) -- cycle ;
\draw    (457.68,91.04) -- (482.09,91.04) ;
\draw [shift={(484.09,91.04)}, rotate = 180] [fill={rgb, 255:red, 0; green, 0; blue, 0 }  ][line width=0.75]  [draw opacity=0] (8.93,-4.29) -- (0,0) -- (8.93,4.29) -- cycle    ;

\draw    (232.82,104.59) .. controls (260.84,125.94) and (313.18,124.94) .. (340.63,105.49) ;
\draw [shift={(341.87,104.59)}, rotate = 503.13] [fill={rgb, 255:red, 0; green, 0; blue, 0 }  ][line width=0.75]  [draw opacity=0] (8.93,-4.29) -- (0,0) -- (8.93,4.29) -- cycle    ;

\draw   (484.09,124.91) -- (511.19,124.91) -- (511.19,152) -- (484.09,152) -- cycle ;
\draw    (497.98,104.59) -- (497.67,122.91) ;
\draw [shift={(497.64,124.91)}, rotate = 270.95] [fill={rgb, 255:red, 0; green, 0; blue, 0 }  ][line width=0.75]  [draw opacity=0] (8.93,-4.29) -- (0,0) -- (8.93,4.29) -- cycle    ;

\draw    (429.91,43.64) .. controls (376.15,107.35) and (451.18,136.6) .. (482.25,138.38) ;
\draw [shift={(484.09,138.45)}, rotate = 181.3] [fill={rgb, 255:red, 0; green, 0; blue, 0 }  ][line width=0.75]  [draw opacity=0] (8.93,-4.29) -- (0,0) -- (8.93,4.29) -- cycle    ;

\draw    (443.8,104.59) -- (482.56,137.17) ;
\draw [shift={(484.09,138.45)}, rotate = 220.04] [fill={rgb, 255:red, 0; green, 0; blue, 0 }  ][line width=0.75]  [draw opacity=0] (8.93,-4.29) -- (0,0) -- (8.93,4.29) -- cycle    ;

\draw    (511.52,91.04) -- (536.28,91.04) ;
\draw [shift={(538.28,91.04)}, rotate = 180] [fill={rgb, 255:red, 0; green, 0; blue, 0 }  ][line width=0.75]  [draw opacity=0] (8.93,-4.29) -- (0,0) -- (8.93,4.29) -- cycle    ;

\draw    (511.52,138.45) -- (537.29,92.79) ;
\draw [shift={(538.28,91.04)}, rotate = 479.44] [fill={rgb, 255:red, 0; green, 0; blue, 0 }  ][line width=0.75]  [draw opacity=0] (8.93,-4.29) -- (0,0) -- (8.93,4.29) -- cycle    ;

\draw (23.55,43.64) node   {$\mathnormal{x_{1}}$};
\draw (78.41,43.64) node   {$x_{2}$};
\draw (132.59,43.64) node   {$x_{3}$};
\draw (169.5,65.65) node [scale=1.7280000000000002]  {$\cdots $};
\draw (24.22,91.04) node   {$y_{1}$};
\draw (132.59,91.04) node   {$y_{3}$};
\draw (78.41,91.04) node   {$y_{2}$};
\draw (233.5,43.64) node   {$\mathnormal{x_{1}}$};
\draw (287.68,43.64) node   {$x_{2}$};
\draw (341.87,43.64) node   {$x_{3}$};
\draw (378.78,65.65) node [scale=1.7280000000000002]  {$\cdots $};
\draw (233.5,91.04) node   {$y_{1}$};
\draw (341.87,91.04) node   {$y_{3}$};
\draw (287.68,91.04) node   {$y_{2}$};
\draw (443.46,43.64) node   {$\mathnormal{x_{1}}$};
\draw (497.64,43.64) node   {$x_{2}$};
\draw (551.82,43.64) node   {$x_{3}$};
\draw (589.41,65.65) node [scale=1.7280000000000002]  {$\cdots $};
\draw (443.46,91.04) node   {$y_{1,1}$};
\draw (551.82,91.04) node   {$y_{3,4}$};
\draw (497.64,91.04) node   {$y_{2,2}$};
\draw (497.64,138.45) node   {$y_{1,3}$};

\end{tikzpicture}
\caption{From left to right: examples of noninteractive, sequential, and full interaction. In each illustration, $x$ variables are user data, and $y$ variables are privatized user responses. In the noninteractive model, each privatized response $y_i$ is a function only of the user's data $x_i$ (and their internal randomness). In the sequential model, each $y_i$ is a function of $x_i$ and previous responses $y_1, \ldots, y_{i-1}$. In the full model, each $y_{i,t}$ is a function of $x_i$ and any $y_{i',t'}$ for any $t' < t$.}
\label{fig:interaction_models}
\end{figure}

For brevity, we sometimes omit the phrase ``locally private'' when discussing these models. In these cases, the privacy of the protocol in question should be clear from context.

\subsection{Multi-party Setting}
\label{subsec:multi}
We combine the communication model of Section~\ref{subsec:two_party} and the notion of local privacy defined in Section~\ref{subsec:dp} in a \emph{multi-party communication model}.

\begin{definition}
	In the \emph{multi-party communication model} there exists an ``Alice input'' $x \in \X$ and a ``Bob input'' $y \in \Y$. Each of an unboundedly large number of players receives an independent and uniformly random draw over $\{x,y\}$. The users' goal is to output $z \in \Z$ such that $(x,y,z) \in \R$.
\end{definition}

A \emph{protocol} is defined as in the local privacy setting (and we will be interested in local privacy as the primary constraint on protocols). As before, we will quantify the likelihood that a protocol achieves a ``good'' outcome. Unlike before, our metric of interest for a multi-party protocol is its \emph{sample complexity}.

\begin{definition}
	The \emph{sample complexity} of a protocol $\A$, denoted by $\sac(\A)$, is the maximum number of users appearing in the transcript over all inputs $(x,y)$ and all random coins. If on all inputs $x$ and $y$ $\P{\A}{(x,y,z) \not \in \mathcal{R}} \leq \gamma$, we say $\A$ \emph{computes $\R$ with error at most $\gamma$}. The \emph{randomized sample complexity with error $\gamma$} of a relation $\R$ is then defined as $$\sac_{\gamma} (\R) = \min_{\A:\A \text{ computes } \R \text{ with error at most } \gamma} \sac(\A).$$ Let $\sac_{\gamma}^{\eps, \non}(\R), \sac_{\gamma}^{\eps, \seq}(\R)$, and $\sac_{\gamma}^{\eps, \full}(\R)$ denote the sample complexities of $\eps$-locally private protocols computing $\R$ with error $\gamma$ under noninteraction, sequential interaction, and full interaction respectively. 
\end{definition}

Our two-party models are defined by an input pair $(x,y)$, and we deliberately constrain our multi-party models to be defined by a pair $(x,y)$ as well. Thus a given two-party problem on a pair of inputs induces a unique multi-party problem on the same inputs, and vice-versa. We note that multi-party problems as we define them are unusual statistical estimation problems: their primary use for us will be in proving lower bounds.  

\section{First Reduction: An Equivalence}
\label{sec:eq_red}
We now prove an equivalence between a two-party problem's communication complexity and the induced multi-party problem's  sample complexity for private sequentially interactive protocols. To do so, we first show that the equivalence holds when the two-party communication channel is noisy in a specific way (Section~\ref{subsec:noisy}). Pairing this with existing results relating communication complexity over noisy and non-noisy channels (Section~\ref{subsec:noisy_2}) completes the result (Section~\ref{subsec:eq}).

\subsection{Noisy Two-party Communication}
\label{subsec:noisy}
In the noisy two-party communication model, Alice and Bob may only communicate over a \emph{binary symmetric channel}.

\begin{definition}
	For $\eps \in (0,1/2)$, a \emph{binary symmetric channel with crossover probability $\eps$} $\bsc_{\eps}$ correctly transmits a bit $b$ with probability $1/2 + \eps$ and transmits $1-b$ with probability $1/2 - \eps$. We additionally suppose that the binary symmetric channel has \emph{feedback}: the sender always sees the received bit.
\end{definition}

Let $\cc_{\gamma}^{\eps}(\R)$ denote the communication complexity of $\R$ with error $\gamma$ under the additional requirement that communication occurs over $\bsc_{\eps}$. First, we show how to transform two-party protocols over a binary symmetric channel with crossover probability depending on $\eps$ into multi-party protocols.

\begin{lemma}
\label{lem:two_to_m}
	Let $\R$ be a relation for some communication problem, $\eps \geq 0$, $\eps' =  \tfrac{e^\eps-1}{4(e^\eps+1)}$, and $\gamma \in (0,1)$. Then $\sac_{\gamma}^{\eps, \seq}(\R) = O\left(\cc_{\gamma}^{\eps'}(\R)\right)$.
\end{lemma}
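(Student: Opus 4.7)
The plan is to simulate a given two-party protocol $\A$ for $\R$ running over $\bsc_{\eps'}$ with feedback by a multi-party sequentially interactive $\eps$-locally private protocol $\A'$ that spends exactly one user per bit of $\A$. Each user in the multi-party model holds either $x$ or $y$, chosen independently and uniformly, so a single user cannot in general know both inputs. The key idea is that when $\A$ requires Alice to transmit a bit, a user who happens to hold $x$ can compute Alice's intended bit $b$ from the public transcript and perform randomized response on $b$, while a user holding $y$ simply outputs a uniformly random bit. Averaging over the uniform draw of the user's input makes the resulting publicly observed bit look exactly like $\bsc_{\eps'}$ applied to $b$, and ``feedback'' is automatic because the output is public.

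Concretely, at each step $\A'$ inspects the transcript to determine whose turn it is in $\A$ and invokes the next unused user with the following randomizer. Suppose it is Alice's turn with intended bit $b$. On input $x$ the user outputs $b$ with probability $1/2 + 2\eps'$ and $1-b$ with probability $1/2 - 2\eps'$; on input $y$ the user outputs $0$ or $1$ uniformly at random. A symmetric randomizer handles Bob's turns. Averaging, the publicly observed output equals $b$ with probability $\tfrac{1}{2}(1/2 + 2\eps') + \tfrac{1}{2}\cdot\tfrac{1}{2} = 1/2 + \eps'$, exactly matching the distribution of $\bsc_{\eps'}(b)$.

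To verify the randomizer is $\eps$-LDP, I would bound the worst-case ratio of output probabilities across the two possible inputs. Of the four relevant ratios, the binding one is $\Pr[R(y) = 1-b]/\Pr[R(x) = 1-b] = (1/2)/(1/2 - 2\eps') = 1/(1-4\eps')$; substituting $\eps' = (e^\eps-1)/(4(e^\eps+1))$ gives $(e^\eps+1)/2 \leq e^\eps$, as required. Since $\A'$ uses one user per bit of $\A$ and induces a transcript distribution identical to $\A$ running over $\bsc_{\eps'}$, $\A'$ solves $\R$ with error at most $\gamma$ using at most $\cc^{\eps'}_\gamma(\R)$ samples, which yields $\sac^{\eps,\seq}_\gamma(\R) = O(\cc^{\eps'}_\gamma(\R))$.

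The main hurdle is choosing randomization probabilities that simultaneously (i) induce the intended symmetric binary channel despite each user's role (Alice vs.\ Bob) being random and unknown to them, and (ii) satisfy $\eps$-LDP. Having ``wrong-input'' users output a uniform bit is what keeps the effective channel symmetric, but it halves the usable gap around $1/2$, which is precisely why the quoted $\eps'$ picks up an extra factor of $2$ compared to what plain randomized response applied on both branches would give. Everything else --- unused users can be discarded, error is inherited directly from $\A$, and the one-user-per-bit design trivially makes $\A'$ sequentially interactive --- is routine bookkeeping.
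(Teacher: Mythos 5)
Your proposal is correct and follows essentially the same approach as the paper: one fresh user per transmitted bit, randomized response when the user holds the relevant input, a uniform bit otherwise, and an averaging argument showing the observed bit matches $\bsc_{\eps'}$. One small note in your favor: the paper's privacy justification is a one-liner ("Since randomized response satisfies $\eps$-differential privacy\ldots"), which elides that the mechanism is not plain randomized response (the wrong-input branch outputs a uniform bit), whereas you correctly identify and bound the actual binding likelihood ratio $(e^\eps+1)/2 \le e^\eps$.
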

\begin{proof}
	Let $\A_2$ be any protocol for the two-party problem over $\bsc_{\eps'}$ computing $\R$ with error $\gamma$.  Consider the first bit sent in $\A_2$. Without loss of generality, Alice sends this first bit $f(x)$, where $x$ is Alice's input. Since communication occurs over $\bsc_{\eps}$, with probability $1/2 + \eps'$ Bob receives $f(x)$, and with probability $1/2 - \eps'$ Bob receives its negation.
		
We will use $\A_2$ to build a multi-party protocol $\A_m$. To simulate this bit, $\A_m$ selects a new (previously un-selected) agent, and the new agent takes one of two actions. If the agent has an Alice input $x$, they apply randomized response to $f(x)$: i.e. they send $f(x)$ with probability $\tfrac{e^\eps}{e^\eps + 1}$ and otherwise send $(1-f(x))$. If instead the agent has a Bob input $y$, they send a uniform random bit. Thus the probability that the agent sends $f(x)$ is
\begin{align*}
	\P{}{\text{Alice input}} \cdot \tfrac{e^\eps}{e^\eps+1} + \P{}{\text{Bob input}} \cdot \tfrac{1}{2} =&\; \tfrac{e^\eps}{2(e^\eps+1)} + \tfrac{1}{4} \\
	=&\; \tfrac{2e^\eps}{4(e^\eps+1)} + \tfrac{e^\eps+1}{4(e^\eps+1)} \\
	=&\; \tfrac{3e^\eps+1}{4(e^\eps+1)} \\
	=&\; \tfrac{2(e^\eps+1)}{4(e^\eps+1)} + \tfrac{e^\eps-1}{4(e^\eps+1)} \\
	=&\; \tfrac{1}{2} + \eps'.
\end{align*}

 It follows that the first bit of $\A_m$ is distributed identically to the first bit of $\A_2$. Repeating this process for each bit sent in $\A_2$, $\A_m$ induces an identical distribution over the bits output, and thus computes $\R$ with error $\gamma$. Since each bit sent in $\A_2$ used a new user in $\A_m$, $\sac(\A_m)= O\left(\cc^{\eps'}(\A_2)\right)$. Since randomized response satisfies $\epsilon$-differential privacy, the sequentially interactive mechanism $\A_m$ is $\epsilon$-differentially private in the local model.
\end{proof}

Next, we show how to transform multi-party protocols into two-party protocols over a binary symmetric channel with crossover probability depending (in a slightly different way) on $\eps$.

\begin{lemma}
\label{lem:m_to_two}
		Let $\R$ be a relation for some communication problem, $0 < \eps = O(1)$, $\eps' =  \tfrac{e^\eps-1}{2(e^\eps+1)}$, and $\gamma, \eta > 0$ such that $\gamma + \eta < 1$. Then $\cc_{\gamma + \eta}^{\eps'}(\R) = O\left(\tfrac{1}{\eta} \cdot \sac_{\gamma}^{\eps, \seq}(\R)\right)$.
\end{lemma}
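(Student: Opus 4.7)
The plan is to simulate $\A_m$ user-by-user in the two-party setting, using shared randomness to assign each simulated user's input to Alice or Bob, and using the $\bsc_{\eps'}$ feedback to retransmit each bit until it is correctly received. Following the one-bit-per-user convention implicit in the proof of Lemma~\ref{lem:two_to_m}, I will assume each user in $\A_m$ contributes a single bit drawn from an $\eps$-DP randomizer applied to its input; a richer-output randomizer can be reduced to this case by first using shared randomness to sample an index into a decomposition of the randomizer into binary-output $\eps$-LDP randomizers, without inflating the number of users.

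Given such an $\A_m$ with $n = \sac_\gamma^{\eps,\seq}(\R)$ users that computes $\R$ with error $\gamma$, the two-party protocol $\A_2$ proceeds as follows. For each simulated user $i$, Alice and Bob flip a fair public coin $c_i \in \{0,1\}$: if $c_i = 0$ the simulated user has input $x$ (Alice's), and otherwise input $y$ (Bob's). The designated party privately samples $b_i \sim R_i(z)$, where the randomizer $R_i$ is determined by the shared transcript so far and $z$ is the party's own input. They then transmit $b_i$ over $\bsc_{\eps'}$, using feedback to resend $b_i$ until it is correctly received. Both parties append $b_i$ to the shared transcript and continue; after the final simulated user, $\A_2$ outputs whatever $\A_m$ would.

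Because the bit ultimately appended to the transcript equals the sender's sampled bit, each $b_i$ has marginal distribution $\tfrac{1}{2}R_i(x) + \tfrac{1}{2}R_i(y)$, exactly matching a user in $\A_m$ whose input is drawn uniformly from $\{x,y\}$; the sequential dependence on previous bits is preserved because both parties share the evolving transcript. Thus $\A_2$'s joint transcript is distributed identically to $\A_m$'s whenever the simulation runs to completion, and $\A_2$ inherits the $\gamma$ error bound on such runs.

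It remains to bound $\A_2$'s communication. Each retransmission succeeds with probability $\tfrac{1}{2} + \eps' = \tfrac{e^\eps}{e^\eps + 1}$, so the number of bits sent per simulated user is a Geometric random variable with expectation $\tfrac{e^\eps + 1}{e^\eps} = O(1)$ whenever $\eps = O(1)$. The total number of bits $T$ across all $n$ users therefore has expectation $O(n)$, and by Markov's inequality $T$ exceeds $cn/\eta$ with probability at most $\eta$ for an appropriate constant $c$. We cap $\A_2$'s communication at $cn/\eta$, and if the cap would be exceeded we abort and output an arbitrary default, adding at most $\eta$ to the total error for the combined bound of $\gamma + \eta$. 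The main obstacle I anticipate is reconciling this variable-length retransmission strategy with the worst-case communication-complexity framework; the Markov-and-truncate step is what handles this, and it is also what produces the $1/\eta$ factor in the final bound.
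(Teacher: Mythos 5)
Your proposal diverges from the paper's proof at the key simulation step, and the divergence introduces a genuine gap. You propose to have the sender sample the private bit $b_i$, transmit it over $\bsc_{\eps'}$, and ``use feedback to resend $b_i$ until it is correctly received,'' after which ``both parties append $b_i$ to the shared transcript.'' But the stopping rule --- stop once the received bit equals $b_i$ --- depends on $b_i$, which only the sender knows. The receiver sees only the sequence of received bits and cannot determine when the sender regards the bit as committed, so the two parties lose synchrony about where they are in the protocol. Feedback lets the sender learn what the receiver saw; it does not let the receiver learn the sender's private stopping criterion. Any attempt to signal ``done'' must itself cross the noisy channel, and the obvious fallback --- a fixed number $T$ of repetitions with majority vote --- no longer reproduces the randomizer's output distribution exactly and, to drive the per-bit failure probability below $\eta/n$ by a union bound, forces $T = \Omega(\log(n/\eta))$, yielding $O(n\log n / \eta)$ communication rather than the claimed $O(n/\eta)$.

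The paper sidesteps this entirely by using exactly one channel use per simulated bit. Writing $p_x$ for the probability that the one-bit $\eps$-randomizer outputs $1$ on input $x$, and $p_{\min}, p_{\max}$ for the extreme values over $\X$, in the case $p_{\min}+p_{\max} \le 1$ the sender transmits $1$ with the biased probability $\tfrac{1}{2} + \tfrac{p_x}{2\eps'(p_{\min}+p_{\max})} - \tfrac{1}{4\eps'}$ (a valid probability precisely because the randomizer is $\eps$-DP). After one pass through $\bsc_{\eps'}$ the received bit is $1$ with probability $\tfrac{p_x}{p_{\min}+p_{\max}}$, and both parties then use shared public randomness to accept the received bit with probability $p_{\min}+p_{\max}$ (else both record a default $0$); a symmetric construction handles $p_{\min}+p_{\max}>1$. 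The committed bit's distribution is exactly $p_x$, the accept/reject decision is a deterministic function of shared randomness and the publicly received bit, so there is no synchronization ambiguity, and each simulated user costs at most one channel bit.

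A smaller issue: you assert that a richer-output $\eps$-randomizer can be reduced to a one-bit one ``without inflating the number of users'' by sampling a shared-randomness index into a decomposition into binary $\eps$-LDP randomizers. No such lossless decomposition exists in general; this step is precisely what Lemma~\ref{lem:one_bit} (Bassily--Smith) supplies, and it costs a multiplicative $e^\eps$ in expected sample size, which the paper's proof explicitly absorbs via $\eps = O(1)$. Your Markov-and-truncate accounting and the observation that the committed bit should have conditional law $\tfrac{1}{2}R_i(x)+\tfrac{1}{2}R_i(y)$ are fine, but as written the retransmission scheme does not yield a well-defined two-party protocol, so the argument does not go through without replacing it with something like the paper's single-transmission biased send.
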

\begin{proof}
	Here, we define $\eps' = \tfrac{e^\eps-1}{2(e^\eps+1)}$, which differs from our previous $\eps'$ by a factor of 2. Let $\A_m$ be any sequentially interactive $\eps$-locally private protocol for a multi-party problem computing $\R$ with error $\gamma$ and sample complexity $n$. By the following result from Bassily and Smith~\cite{BS15}, we can transform $\A_m$ into a new, functionally equivalent protocol $\A_m'$ in which each user sends only one bit.
		
		\begin{lemma}[Theorem 4.1 in~\citet{BS15}]
		\label{lem:one_bit}
		Given an $\eps$-locally private protocol $\A$ with expected number of randomizer calls $T$, there exists a sequentially interactive $(\eps,0)$-locally private protocol $\A'$ with expected number of users $e^\eps \cdot T$ where each user sends a single bit (produced by a call to a single $\eps$-local randomizer). Moreover, there exists a deterministic function $f$ on transcripts such that $f(\Pi(\A')) = \Pi(\A)$, where $\Pi(\cdot)$ denotes a distribution over transcripts induced  by a given protocol with randomness is over the protocol and its samples.
		\end{lemma}
		
		The cost is twofold. First, $\A_m'$ requires $O(n \log(\log(n)))$ bits of public randomness. Second, $\A_m'$ requires $e^\eps n$ users in expectation. By Markov's inequality, the number of users can be bounded by $\tfrac{e^\eps n}{\eta}$ at the cost of an $\eta$ increase in failure probability.
		
		We now transform $\A_m'$ into a two-player protocol $\A_2$. The idea will be to have Alice and Bob simulate $A_m'$ by randomly partitioning the users from the multi-party protocol between themselves, and each simulating the role of their assigned users. Recall that for multi-party communication problems, users are randomly assigned ``Alice'' or ``Bob'' data points, and so this random partition will induce the correct distribution on data elements. Thus, $\A_2$ begins with Alice and Bob using their shared public randomness to generate $\tfrac{e^\eps n}{\eta}$  coin flips determining who will simulate which agents.
		
		Without loss of generality, suppose Alice simulates the first agent. Let $$p_x = \P{}{\text{agent sends } 1 \mid \text{agent has Alice's data } x},$$ and let $p_{\min} = \min_{x \in \X} p_x$ and $p_{\max} = \max_{x \in \X} p_x$.  Alice and Bob take one of two choices depending on $p_{\min}$ and $p_{\max}$.
		
		\underline{Case 1}: $p_{\min} + p_{\max} \leq 1$. Then Alice sends 1 with probability $\tfrac{1}{2} + \tfrac{p_x}{2\eps'(p_{\min} + p_{\max})} - \tfrac{1}{4\eps'}$ and sends 0 with the remaining probability. Since
		\begin{align*}
			\tfrac{1}{2} + \tfrac{p_x}{2\eps'(p_{\min} + p_{\max})} - \tfrac{1}{4\eps'} =&\; \tfrac{1}{2} + \tfrac{2p_x - p_{\min} - p_{\max}}{4\eps'(p_{\min} + p_{\max})} \\
			\leq&\; \tfrac{1}{2} + \tfrac{p_{\max} - p_{\min}}{4\eps'(p_{\min} + p_{\max})} \\
			=&\; \tfrac{1}{2} + \tfrac{e^\eps+1}{2(e^\eps-1)} \cdot \tfrac{p_{\max} - p_{\min}}{p_{\max} + p_{\min}} \\
			=&\; \tfrac{1}{2} + \tfrac{e^\eps+1}{2(e^\eps-1)} \cdot \left[1 - \tfrac{2p_{\min}}{p_{\max} + p_{\min}}\right] \\
			\leq&\; \tfrac{1}{2} + \tfrac{e^\eps+1}{2(e^\eps-1)}\left[1 - \tfrac{2}{e^\eps+1}\right] = 1
		\end{align*}
(where both inequalities use the fact that the agent sends output from an $\eps$-local randomizer), and similarly
	\begin{align*}
		\tfrac{1}{2} + \tfrac{2p_x - p_{\min} - p_{\max}}{4\eps'(p_{\min} + p_{\max})} \geq&\; \tfrac{1}{2} + \tfrac{e^\eps+1}{2(e^\eps-1)} \cdot \tfrac{p_{\min} - p_{\max}}{p_{\max} + p_{\min}} \\
		\geq&\; \tfrac{1}{2} + \tfrac{e^\eps+1}{2(e^\eps-1)}\left[\tfrac{2}{e^\eps+1} - 1\right] = 0
	\end{align*}
these are valid probabilities. Next, as Alice sends the bit over $\bsc_{\eps'}$, the probability that the received bit is 1 is $$\left(\tfrac{1}{2} + \eps'\right)\left(\tfrac{1}{2} + \tfrac{p_x}{2\eps'(p_{\min} + p_{\max})} - \tfrac{1}{4\eps'}\right) + \left(\tfrac{1}{2} - \eps'\right)\left(\tfrac{1}{2} - \tfrac{p_x}{2\eps'(p_{\min} + p_{\max})} + \tfrac{1}{4\eps'}\right) = \tfrac{p_x}{p_{\min} + p_{\max}}.$$ With probability $p_{\min} + p_{\max}$, Alice and Bob ``use'' the received bit: that is, they enter this received bit into their transcript and continue the protocol. With probability $1 - p_{\min} - p_{\max}$ Alice and Bob instead enter the bit 0 into their transcript (and omit the true received bit from the transcript) and continue. Then $\P{}{\text{enter 1 in transcript}}$ (and $\P{}{\text{enter 0 in transcript}}$) are identical in both the two-party and multi-party protocols.

		\underline{Case 2}: $p_{\min} + p_{\max} > 1$. Then if we define $p_{\min}'$ and $p_{\max}'$ as $1 - p_{\min}$ and $1 - p_{\max}$ respectively, we get $p_{\min}' + p_{\max}' < 1$. Let $p_x' = 1 - p_x$ and have Alice send 0 with probability $\tfrac{1}{2} + \tfrac{p_x'}{2\eps'(p_{\min}' + p_{\max}')} - \tfrac{1}{4\eps'}$, and Alice and Bob ``use'' the received bit (just as defined in Case 1) with probability $p_{\min}' + p_{\max}'$. Repeating the analysis from Case 1 for $p_x'$, $p_{\min}'$, and $p_{\max}'$ yields that $\P{}{\text{use 0}}$ (and $\P{}{\text{use 1}}$) are identical in both the two-party and multi-party protocols.
		
		Combining Cases 1 and 2, Alice and Bob produce the same distribution over the first bit of the protocol in $\A_2$ and $\A_m$. Since we can repeat this process for subsequent bits, by induction the distribution over transcripts (and thus answers) is identical. Therefore $\A_2$  also computes $\R$ with error $\gamma$. Moreover, there is a one-to-one correspondence between users in $\A_m$ and bits in $\A_2$, so by $\eps = O(1)$, $\cc_{\gamma + \eta}^{\eps'}(\mathcal{R}) = O\left(\tfrac{1}{\eta} \cdot \sac_{\gamma}^{\eps,S}(\mathcal{R})\right)$.
\end{proof}

\subsection{Relating Noisy and Noiseless Communication}
\label{subsec:noisy_2}
Lemmas~\ref{lem:two_to_m} and~\ref{lem:m_to_two} relate the sample complexity of sequentially interactive $\eps$-locally private multi-party protocols and the communication complexity of two-party protocols over a binary symmetric channel. The remaining step is to relate noisy and noiseless communication complexity. Fortunately, one direction of this relationship follows almost immediately from previous work by Braverman and Mao~\cite{BM15}.

\begin{lemma}[Theorem 3.1 in Braverman and Mao~\cite{BM15}]
\label{lem:bm15}
	For every protocol $\A$ over $\bsc_{\eps}$ with feedback, there exists a protocol $\A'$ over a noiseless channel that simulates $\A$ with $\overbar{\cc}(\A') = O\left(\eps^2 \cc^{\eps}(\A)\right)$. Here, $\overbar{\cc}\left(\A'\right)$ is the maximum over all inputs $(x,y)$ of the expected number of bits exchanged over the randomness of $\A'$.
\end{lemma}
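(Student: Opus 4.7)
The proof strategy is an information-theoretic argument: bound how much information the $\bsc_\eps$ transcript actually reveals per transmitted bit, then exhibit a noiseless simulation whose expected length matches that information content. The key observation is that a bit pushed through $\bsc_\eps$ conveys mutual information $1 - h(1/2 - \eps) = \Theta(\eps^2)$ about its source, where $h$ is the binary entropy function; this is the ``exchange rate'' that ultimately yields the $\eps^2$ savings.

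First, I would formalize the information cost. Let $X, Y$ be Alice's and Bob's inputs and let $\Pi$ denote the full noisy transcript of $\A$. Because the channel has feedback, both parties observe $\Pi$ exactly, so $\Pi$ is common knowledge. Using a chain-rule decomposition bit-by-bit and the per-bit mutual information bound above, I would show that the (external) information cost satisfies $I(XY;\Pi) = O(\eps^2 \cc^\eps(\A))$; analogous bounds hold for the internal information cost $I(X;\Pi\mid Y) + I(Y;\Pi\mid X)$. The crucial step here is that the noisy channel acts as a built-in ``information bottleneck'' on each bit, regardless of how cleverly $\A$ uses the bits.

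Second, I would build $\A'$ by simulating $\A$ node-by-node over its protocol tree using shared public randomness. At each node where, say, Alice is the next sender, the parties use their shared randomness to sample a pretend received bit from a known prior (the distribution of that bit averaged over inputs). Alice then uses rejection sampling to convert the pretend bit into a sample from the correct conditional distribution given her true input $X$; a standard KL-to-communication argument shows that the expected number of noiseless bits Alice must actually transmit at this step is $O(D_{KL}(\mu_{\text{true}} \Vert \mu_{\text{prior}}))$, which telescopes along the transcript to the information cost computed in Step~1. Because of feedback both Alice and Bob agree on the simulated received bit and can proceed deterministically down the tree, so the simulation is consistent.

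The main technical obstacle is the rejection-sampling/compression step: one must argue that the approximate sampler succeeds with high probability, that failures can be handled by a restart or code-based correction mechanism without blowing up the expected communication, and that the output distribution of $\A'$ is statistically close to (ideally equal to) that of $\A$. A careful coupling between the simulated and real transcripts, together with a standard amplification argument to drive any residual simulation error below the target, completes the proof and yields $\overbar{\cc}(\A') = O(\eps^2 \cc^\eps(\A))$.
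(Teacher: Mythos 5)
The paper does not prove this lemma at all: it is imported verbatim, with attribution, as Theorem~3.1 of Braverman and Mao~\cite{BM15}, so there is no in-paper proof to compare against. Judging your reconstruction against what the cited theorem actually requires, I see one correct step and one genuine gap.

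Your Step~1 is sound. With feedback the transcript $\Pi$ is exactly the sequence of received bits, and at each step the conditional distribution of the received bit given the history and either input is Bernoulli with parameter in $[1/2-\eps,\,1/2+\eps]$; so by the chain rule
\begin{equation*}
I(XY;\Pi) \;=\; \sum_i I\bigl(XY;\hat b_i \,\big|\, \hat b_{<i}\bigr) \;=\; O\bigl(\eps^2 \cdot \cc^{\eps}(\A)\bigr).
\end{equation*}

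The gap is in Step~2, the compression. The one-shot ``KL-to-communication'' samplers you invoke (correlated or rejection sampling with shared randomness) simulate a single step at expected cost $D_{KL}(\mu_{\text{true}}\Vert\mu_{\text{prior}}) + O\bigl(\log(D_{KL}+1)\bigr) + O(1)$, not $O(D_{KL})$. Because each per-bit KL here is $O(\eps^2)\ll 1$, the additive $O(1)$ overhead dominates at every node, and summing over the $\cc^{\eps}(\A)$ transmitted bits yields expected noiseless communication $O\bigl(\cc^{\eps}(\A)\bigr)$, not $O\bigl(\eps^2\cc^{\eps}(\A)\bigr)$. The ``telescoping'' you describe only controls the sum of KL terms (i.e.\ the information cost), not these unavoidable per-step constants. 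The missing idea is \emph{batching}: group roughly $\Theta(1/\eps^2)$ consecutive noisy transmissions into a chunk whose total information content is $\Theta(1)$, and simulate each chunk as a single joint sampling task, so that the $O(1)$ additive overhead is paid once per chunk rather than once per bit. Feedback is what makes this workable, since the identity of the sender at every node inside a chunk is common knowledge and the chunk's joint distribution is determined by the inputs together with shared randomness. This amortization is the crux of the Braverman--Mao argument; without it, node-by-node rejection sampling over the protocol tree cannot reach the $O\bigl(\eps^2\cc^{\eps}(\A)\bigr)$ bound.
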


By Markov's inequality, we get a high-probability version of their result for our setting.

\begin{lemma}
\label{lem:bm15_good}
	Let $\R$ be a relation for a two-party communication problem. Then for $\gamma, \eta >0$ where $\gamma + \eta <1$, $\cc_{\gamma + \eta}\left(\R\right) = O\left(\tfrac{\eps^2}{\eta} \cdot \cc_{\gamma}^{\eps}(\R)\right)$.
\end{lemma}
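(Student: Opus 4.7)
The plan is to convert the expected-communication bound from Lemma~\ref{lem:bm15} into a worst-case bound by truncation, paying an additive $\eta$ in error via Markov's inequality.

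First, fix a protocol $\A$ over $\bsc_\eps$ with feedback that attains $\cc_\gamma^\eps(\R)$ and computes $\R$ with error at most $\gamma$. Applying Lemma~\ref{lem:bm15} yields a noiseless protocol $\A'$ which simulates $\A$ (and so inherits its error guarantee $\gamma$, since it induces the same distribution over transcripts/outputs) and whose expected communication satisfies $\overbar{\cc}(\A') = O(\eps^2 \cc_\gamma^\eps(\R))$, where the expectation is over the internal randomness of $\A'$ for each fixed input.

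Next, define a truncated protocol $\A''$ as follows: Alice and Bob run $\A'$ but halt and output a fixed default answer whenever the number of bits exchanged exceeds $B := \tfrac{1}{\eta} \cdot \overbar{\cc}(\A')$. By construction, $\cc(\A'') \le B = O\!\left(\tfrac{\eps^2}{\eta} \cdot \cc_\gamma^\eps(\R)\right)$ on every input. For any fixed input $(x,y)$, Markov's inequality applied to the (nonnegative) random variable counting bits exchanged by $\A'$ gives
\[
\Pr\!\left[\text{$\A'$ exchanges more than $B$ bits on $(x,y)$}\right] \;\le\; \frac{\mathbb{E}[\text{bits on }(x,y)]}{B} \;\le\; \frac{\overbar{\cc}(\A')}{B} \;=\; \eta.
\]
Whenever $\A'$ would have finished within $B$ bits, $\A''$ outputs exactly what $\A'$ (and hence $\A$) would have output, so its error contribution is at most $\gamma$. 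In the remaining event, of probability at most $\eta$, we conservatively count the output as erroneous. A union bound then yields that $\A''$ computes $\R$ with error at most $\gamma + \eta$, proving the claimed bound on $\cc_{\gamma+\eta}(\R)$.

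The only subtlety is that Lemma~\ref{lem:bm15} bounds the \emph{expected} bits per input rather than the worst-case bits, so it does not by itself certify a protocol of the required communication complexity in the standard (worst-case) sense; the Markov truncation step is what bridges this gap, at the unavoidable cost of the additive $\eta$ in error. No additional obstacle is anticipated, since the simulation in Lemma~\ref{lem:bm15} preserves the distribution of the output exactly.
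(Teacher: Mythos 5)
Your proof is correct and matches the paper's approach exactly: the paper states only that Lemma~\ref{lem:bm15_good} follows ``by Markov's inequality'' from Lemma~\ref{lem:bm15}, and the truncation argument you spell out (cap at $B = \overbar{\cc}(\A')/\eta$, pay an additive $\eta$ in error, union bound) is precisely the intended instantiation of that one-line remark.
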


It remains to upper bound noisy communication complexity using noiseless communication complexity. Schulman~\cite{S96} first studied the problem of interactive two-party communication through noisy channels and showed how to simulate a noiseless channel using a binary symmetric channel with small ($<1/240$) crossover probability and a constant blowup in communication complexity. Braverman and Rao~\cite{BR14} then improved this result to binary symmetric channels with crossover probability bounded away from $1/8$ by a constant.

\begin{lemma}[Simplified Version of Theorem 2 in Braverman and Rao~\cite{BR14}]
\label{lem:br14}
	Let $\R$ be a relation for a two-party communication problem, $\gamma \in (0,1)$, and $0 < p \leq 1/8 - c$ where $c = \Omega(1)$. Then $\cc_{\gamma}^p(\R) = O\left(\cc_{\gamma}(\R)\right)$.
\end{lemma}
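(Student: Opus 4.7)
The plan is to derive the statement as a straightforward corollary of the main simulation theorem of Braverman and Rao~\cite{BR14}. Their result constructs, for any noiseless two-party protocol $\A^*$ of length $N$ and any binary symmetric channel $\bsc_p$ with $p \leq 1/8 - c$, a simulating protocol $\A'$ over $\bsc_p$ of length $O_c(N)$ whose induced transcript agrees with that of $\A^*$ except with probability $\exp(-\Omega_c(N))$. The multiplicative constant in the length blowup and the hidden constant in the exponent depend only on the gap $c$, which is $\Omega(1)$ by hypothesis.

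First I would fix an optimal noiseless protocol $\A^*$ computing $\R$ with error at most $\gamma$ and $\cc(\A^*) = \cc_\gamma(\R)$. Applying the Braverman-Rao construction to $\A^*$ produces a protocol $\A'$ over $\bsc_p$ of length $O(\cc_\gamma(\R))$, whose output is defined as the output that $\A^*$ would produce on the simulated transcript. By a union bound, $\A'$ fails to output a valid element of $\R$ with probability at most $\gamma + \exp(-\Omega(\cc_\gamma(\R)))$, since $\A^*$ itself errs with probability at most $\gamma$ and the simulation fails with probability exponentially small in $\cc_\gamma(\R)$.

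To collapse this into the target error $\gamma$, I would handle two regimes. If $\cc_\gamma(\R)$ exceeds some sufficiently large constant, then the simulation-failure term is below any constant we like, and we can instead start from a base protocol with error, say, $\gamma/2$; since $\gamma = \Omega(1)$, standard amplification (running $O(1)$ independent copies and selecting among candidate outputs) changes the communication complexity by only a constant factor. If $\cc_\gamma(\R)$ is itself a constant, the statement is essentially trivial, since a constant-bit noiseless protocol can be simulated over $\bsc_p$ using a constant number of bits by sending each bit via repetition and majority decoding.

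The only non-routine ingredient is the Braverman-Rao construction, which is built on tree codes and which I would invoke as a black box. The only ``obstacle,'' to the extent there is one, is the mild bookkeeping needed to translate their rate and exponentially-small-failure guarantees into a clean statement matching the target error $\gamma$; this is handled by the amplification argument above.
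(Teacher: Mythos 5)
The paper does not actually prove Lemma~\ref{lem:br14}; it is imported directly as a ``simplified version'' of Theorem~2 of Braverman and Rao, with no derivation given, so there is no internal proof to compare your argument against. Your route --- invoke the BR14 tree-code simulation as a black box, note that the length blowup is $O_c(\cc_\gamma(\R))$ while the simulation fails with probability $\exp(-\Omega_c(\cc_\gamma(\R)))$, then union-bound the base error with the simulation-failure error --- is exactly the natural way to reconstruct the statement from their theorem, and everything up to the union bound is sound.

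The one genuine gap is in the amplification step you use to collapse $\gamma + \exp(-\Omega(\cdot))$ back to $\gamma$. For \emph{decision} problems, running $O(1)$ independent copies and taking a majority vote works and costs only a constant factor. But $\R$ is an arbitrary \emph{relation}: a given $(x,y)$ may admit many valid outputs, and neither Alice nor Bob can, on her own, check whether a candidate $z_i$ satisfies $(x,y,z_i)\in\R$, since each sees only one of the two inputs. ``Selecting among candidate outputs'' therefore requires additional communication, and there is no generic $O(1)$-bit procedure for doing this over an arbitrary relation --- so the claimed constant-factor amplification does not go through as stated. You also implicitly assume $\gamma = \Omega(1)$ in this step, whereas the lemma quantifies over all $\gamma\in(0,1)$. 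In fairness, neither issue harms the paper: the lemma is only ever applied with $\gamma = \Omega(1)$ and with an explicit additive slack $\eta$ already present in Theorem~\ref{thm:eq}, so the cleaner statement to carry through the argument would be $\cc^p_{\gamma+\eta}(\R) = O(\cc_\gamma(\R))$ for constant $\eta>0$, which your union bound proves directly without any amplification at all. That is the form I would recommend recording, rather than claiming exact error $\gamma$ on the noisy side.
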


One obstacle remains: Lemma~\ref{lem:br14} requires a channel $C$ with crossover probability bounded away from $1/8$, while our channel $C'$ may have crossover probability $\eps$-close to $1/2$. We therefore use a standard amplification argument, replacing each bit over $C$ with $\Theta(1/\eps^2)$ bits over $C'$ and taking the majority as the transmitted bit. This yields Lemma~\ref{lem:br14_good}.

\begin{lemma}
\label{lem:br14_good}
	Let $\R$ be a relation for a two-party communication problem. Then $\cc_{\gamma}^{\eps}(\R) = O\left(\tfrac{1}{\eps^2} \cdot \cc_{\gamma}(\R)\right)$.
\end{lemma}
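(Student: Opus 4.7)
The plan is to combine Lemma~\ref{lem:br14}, which handles binary symmetric channels of constant quality, with a standard repetition-and-majority amplification that turns the possibly very noisy $\bsc_\eps$ into such a constant-quality channel at a $\Theta(1/\eps^2)$ overhead per bit.

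Concretely, fix any constant $p^*$ allowed by Lemma~\ref{lem:br14}. Let $\A$ be a noiseless protocol computing $\R$ with error $\gamma$ using $\cc(\A) = \cc_\gamma(\R)$ bits. Applying Lemma~\ref{lem:br14} yields a protocol $\A^*$ that simulates $\A$ over a BSC of parameter $p^*$ with feedback, uses $O(\cc(\A)) = O(\cc_\gamma(\R))$ channel transmissions, and still computes $\R$ with error $\gamma$. I would then simulate each single transmission of $\A^*$ by a block of $k = \Theta(1/\eps^2)$ copies over $\bsc_\eps$: the sender transmits the intended bit $b$ exactly $k$ times, and the receiver decodes by majority vote. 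Since each received bit independently agrees with $b$ with probability $1/2 + \eps$, Hoeffding's inequality bounds the probability that the decoded majority disagrees with $b$ by $e^{-2k\eps^2}$; taking $k$ to be a sufficiently large constant times $1/\eps^2$ drives this below $p^*$. If the induced block error turns out strictly less than $p^*$, the decoder can add independent local noise to raise it to exactly $p^*$, so that the simulated channel is genuinely a BSC with the parameter required by Lemma~\ref{lem:br14}.

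Feedback is preserved for free, because $\bsc_\eps$ supplies feedback and the sender therefore sees every one of the $k$ received bits and recomputes the same majority as the receiver. Summing over the $O(\cc_\gamma(\R))$ blocks, the overall simulation uses $O(\cc_\gamma(\R)/\eps^2)$ transmissions of $\bsc_\eps$ and inherits the error guarantee of $\A^*$, yielding $\cc_\gamma^\eps(\R) = O(\cc_\gamma(\R)/\eps^2)$. I expect no serious obstacle; the only thing to handle carefully is confirming that the simulated channel genuinely satisfies the hypothesis of Lemma~\ref{lem:br14}, which the majority decoding plus optional padding noise accomplishes, and keeping the feedback bookkeeping consistent so that both parties always agree on each simulated bit.
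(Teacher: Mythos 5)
Your proof is correct and follows the same approach the paper sketches: apply Lemma~\ref{lem:br14} to get a protocol over a constant-quality channel, then simulate each transmission of that channel by a $\Theta(1/\eps^2)$-bit majority-vote block over $\bsc_\eps$, using feedback so both parties agree on each decoded bit. The padding-noise step you add to make the simulated channel's crossover probability exactly $p^*$ is a harmless extra precaution beyond the paper's two-sentence treatment.
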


\subsection{Equivalence}
\label{subsec:eq}
The results above come together in the following equivalence.

\begin{theorem}
\label{thm:eq}
	Let $\R$ be a relation for some communication problem. Then for any $\eps = O(1)$ and $0 < \gamma, \eta = \Omega(1)$ such that $\gamma + \eta < 1$, $\sac_\gamma^{\eps,S}(\R) = \Theta\left(\tfrac{1}{\eps^2} \cdot \cc_{\gamma + \eta}(\R)\right)$.
\end{theorem}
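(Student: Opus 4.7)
The plan is to establish the $\Theta$ bound by composing the four reductions from Sections~\ref{subsec:noisy} and~\ref{subsec:noisy_2}, handling the two matching inequalities separately.

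For the lower-bound direction $\sac_\gamma^{\eps,S}(\R) = \Omega(\tfrac{1}{\eps^2}\cc_{\gamma+\eta}(\R))$ --- the half that drives this paper's sample-complexity lower bounds --- I chain Lemma~\ref{lem:m_to_two} and Lemma~\ref{lem:bm15_good}, splitting the $\eta$ slack evenly. Starting from an optimal $\eps$-LDP sequentially interactive protocol with sample complexity $n = \sac_\gamma^{\eps,S}(\R)$ and error $\gamma$, Lemma~\ref{lem:m_to_two} (with slack $\eta/2$) yields a two-party protocol over $\bsc_{\eps''}$ of error $\gamma + \eta/2$ and communication $O(n/\eta)$, where $\eps'' = (e^\eps-1)/(2(e^\eps+1))$. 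Lemma~\ref{lem:bm15_good} (with further slack $\eta/2$) then simulates this noisy protocol over a noiseless channel with error $\gamma+\eta$ and communication $O(\tfrac{\eps''^2}{\eta}\cdot\tfrac{n}{\eta}) = O(\tfrac{\eps^2 n}{\eta^2})$. Because $\eps = O(1)$ makes $\eps'' = \Theta(\eps)$ and $\eta = \Omega(1)$, this simplifies to $\cc_{\gamma+\eta}(\R) = O(\eps^2 n)$, which rearranges to the desired lower bound on $n$.

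For the matching upper bound $\sac_\gamma^{\eps,S}(\R) = O(\tfrac{1}{\eps^2}\cc_{\gamma+\eta}(\R))$ I compose the other two lemmas in the opposite direction. Beginning with an optimal noiseless two-party protocol for $\R$ with error $\gamma+\eta$, Lemma~\ref{lem:br14_good} converts it to a protocol over $\bsc_{\eps'}$ (with $\eps' = (e^\eps-1)/(4(e^\eps+1)) = \Theta(\eps)$) of the same error and communication $O(\cc_{\gamma+\eta}(\R)/\eps^2)$. Lemma~\ref{lem:two_to_m} then turns this into an $\eps$-LDP sequentially interactive multi-party protocol of error $\gamma+\eta$ and sample complexity equal to the noisy CC up to constants, giving $\sac_{\gamma+\eta}^{\eps,S}(\R) = O(\tfrac{1}{\eps^2}\cc_{\gamma+\eta}(\R))$. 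A standard constant-factor amplification (repeat the protocol and aggregate, valid because $\gamma,\eta = \Omega(1)$ with $\gamma+\eta<1$) then tightens the error back from $\gamma+\eta$ to $\gamma$ without changing the $1/\eps^2$ scaling.

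Rather than a deep technical obstacle, the main care required is bookkeeping in two places. First, the two chains use different noisy-channel crossover probabilities ($\eps'$ in the upper-bound direction, $\eps''$ in the lower-bound direction); both are $\Theta(\eps)$ for $\eps = O(1)$, so $1/\eps'^2$ and $1/\eps''^2$ each equal $\Theta(1/\eps^2)$ and the $\eps$-dependence lines up on both sides. Second, the $\eta$ slack must be split carefully across the two-step reduction in the lower-bound direction (with $\eta/2$ allocated to each step), and amplification must be invoked in the upper-bound direction. Both $\gamma$- and $\eta$-dependent quantities are then folded into the constants hidden by the $\Theta$.
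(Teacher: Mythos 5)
Your proof follows essentially the same route as the paper's: for the $\Omega$ direction you chain Lemma~\ref{lem:m_to_two} into Lemma~\ref{lem:bm15_good} with the $\eta$ slack split in half, and for the $O$ direction you chain Lemma~\ref{lem:br14_good} into Lemma~\ref{lem:two_to_m}. The one place you diverge is the final amplification step in the upper-bound direction. The paper simply proves $\sac_\gamma^{\eps,S}(\R) = O(\tfrac{1}{\eps^2}\cc_\gamma(\R))$ --- with error parameter $\gamma$ on both sides rather than $\gamma$ on the left and $\gamma+\eta$ on the right --- and combines this with the lower bound $\sac_\gamma^{\eps,S}(\R) = \Omega(\tfrac{1}{\eps^2}\cc_{\gamma+\eta}(\R))$, implicitly using the standard convention that for $\gamma,\eta=\Omega(1)$ the error parameter can shift by a constant. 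Your attempt to ``tighten'' the error from $\gamma+\eta$ back to $\gamma$ by ``repeat the protocol and aggregate'' does not work for general relations $\R$: the problems in this paper (hidden layers, pointer chasing) are search problems, for which multiple valid outputs exist, so majority vote over repeated runs is not well-defined, and there is no black-box way to verify which candidate output satisfies $\R$. Drop the amplification step; it is both unnecessary (the paper's composition already gives what is needed, up to the usual loosening of error parameters) and not justified as written.
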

\begin{proof}
	We first show $\sac_{\gamma}^{\eps,S}(\R) = O\left(\tfrac{1}{\eps^2} \cdot \cc_\gamma(\R)\right)$. By Lemma~\ref{lem:two_to_m}, $\sac_{\gamma}^{\eps,S}(\R) = O\left(\cc_{\gamma}^{\eps_1}(\R)\right)$ where $\eps_1 =  \tfrac{e^\eps-1}{4(e^\eps+1)}$. Then, by Lemma~\ref{lem:br14_good}, $\cc_{\gamma}^{\eps_1}(\R) = O\left(\tfrac{1}{\eps_1^2} \cdot \cc_{\gamma}(\R)\right)$. Since $\eps = O(1)$, $\eps_1 = \Omega(\eps)$, and tracing back yields the claim.
	
	Next, we show $\cc_{\gamma + \eta}(\R) = O\left(\eps^2 \cdot \sac_{\gamma}^{\eps, S}(\R)\right)$. Since $\gamma, \eta = \Omega(1)$, by Lemma~\ref{lem:bm15_good} $\cc_{\gamma + \eta}(\R) = O\left(\eps_1^2 \cdot \cc_{\gamma + \eta/2}^{\eps_1}(\R)\right)$ where $\eps_1 =  \tfrac{e^\eps-1}{2(e^\eps+1)}$. By Lemma~\ref{lem:m_to_two}, $\cc_{\gamma + \eta/2}^{\eps_1}(\R) = O\left(\sac_{\gamma}^{\eps, S}(\R)\right)$. Tracing back and using $\eps_1 = O(\eps)$ implies the claim.
\end{proof}

\section{Separating Sequential and Full Interactivity}
\label{sec:si_fi}
In this section, we use Theorem~\ref{thm:eq} to show that the \emph{hidden layers problem} is hard for sequentially interactive protocols (Corollary~\ref{cor:lb}). In contrast, we show that the same problem is ``easy'' for fully interactive protocols (Theorem~\ref{thm:fi_ub}). We start by introducing the problem below.

\subsection{Hidden Layers Problem $\hl$}
\label{subsec:hlp}
In this section, we formally recap the hidden layers problem that drives our results. While~\citet{B13} first proposed this problem, we imitate the presentation of~\citet{GKR16}.

The hidden layers problem is parameterized by $k \in \mathbb{N}$ and denoted $\hl(k)$. It features a $2^{4k}$-ary tree $\T$ with directed edges from root to leaves and $2^{rs} + 1$ layers where $r = 2^{2^{8k}}$ and $s = 2^{8k}$. $\T$ thus has a number of layers triply exponential in $k$ and a number of leaves quadruply exponential in $k$. Two players, Alice and Bob, each receive a small amount of information about $\T$. Alice receives $(a,f)$ where $a \in \{0,2, \ldots, 2^{rs}-2\}$ indexes an even-numbered layer, and $f$ labels each vertex in layer $a$ of $\T$ with a single outgoing edge. Similarly, Bob receives $(b,g)$ where $b \in \{1, 3, \ldots, 2^{rs}-1\}$ indexes an odd-numbered layer, and $g$ labels each vertex in layer $b$ with a single outgoing edge. Thus, Alice and Bob each have information about one ``hidden layer'' of $\T$. Letting $v$ be a leaf of $\T$, we say $v$ is \emph{consistent} with $(a,f)$ (or $(b,g)$)  if the path from the root to $v$ goes through an edge identified by $f$ (or $g$, respectively).

The hidden layers problem is a search problem: Alice and Bob must output the same leaf $v$ consistent with both $(a,f)$ and $(b,g)$. Crucially, many such $v$ exist. If at the end of protocol $\A$ Alice and Bob output different leaves, or at least one of them outputs a leaf not consistent with at least one of $(a,f)$ and $(b,g)$, we say $\A$ \emph{errs}. A simplified illustration of the hidden layers problem appears in Figure~\ref{fig:hl}.

\vspace{10pt}

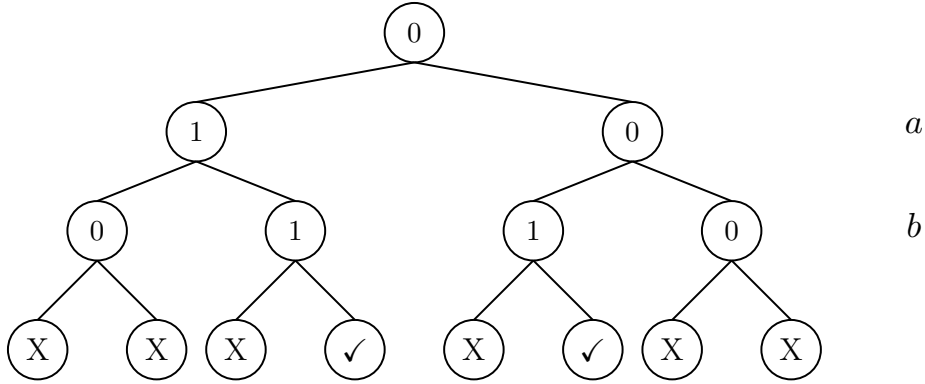
\begin{figure}[h]
\centering

\tikzset{every picture/.style={line width=0.75pt}} 

\begin{tikzpicture}[x=0.75pt,y=0.75pt,yscale=-1,xscale=1]

\draw   (220,25) .. controls (220,16.72) and (226.72,10) .. (235,10) .. controls (243.28,10) and (250,16.72) .. (250,25) .. controls (250,33.28) and (243.28,40) .. (235,40) .. controls (226.72,40) and (220,33.28) .. (220,25) -- cycle ;
\draw   (110,75) .. controls (110,66.72) and (116.72,60) .. (125,60) .. controls (133.28,60) and (140,66.72) .. (140,75) .. controls (140,83.28) and (133.28,90) .. (125,90) .. controls (116.72,90) and (110,83.28) .. (110,75) -- cycle ;
\draw   (160,125) .. controls (160,116.72) and (166.72,110) .. (175,110) .. controls (183.28,110) and (190,116.72) .. (190,125) .. controls (190,133.28) and (183.28,140) .. (175,140) .. controls (166.72,140) and (160,133.28) .. (160,125) -- cycle ;
\draw   (60,125) .. controls (60,116.72) and (66.72,110) .. (75,110) .. controls (83.28,110) and (90,116.72) .. (90,125) .. controls (90,133.28) and (83.28,140) .. (75,140) .. controls (66.72,140) and (60,133.28) .. (60,125) -- cycle ;
\draw   (90,185) .. controls (90,176.72) and (96.72,170) .. (105,170) .. controls (113.28,170) and (120,176.72) .. (120,185) .. controls (120,193.28) and (113.28,200) .. (105,200) .. controls (96.72,200) and (90,193.28) .. (90,185) -- cycle ;
\draw   (30,185) .. controls (30,176.72) and (36.72,170) .. (45,170) .. controls (53.28,170) and (60,176.72) .. (60,185) .. controls (60,193.28) and (53.28,200) .. (45,200) .. controls (36.72,200) and (30,193.28) .. (30,185) -- cycle ;
\draw   (190,185) .. controls (190,176.72) and (196.72,170) .. (205,170) .. controls (213.28,170) and (220,176.72) .. (220,185) .. controls (220,193.28) and (213.28,200) .. (205,200) .. controls (196.72,200) and (190,193.28) .. (190,185) -- cycle ;
\draw   (130,185) .. controls (130,176.72) and (136.72,170) .. (145,170) .. controls (153.28,170) and (160,176.72) .. (160,185) .. controls (160,193.28) and (153.28,200) .. (145,200) .. controls (136.72,200) and (130,193.28) .. (130,185) -- cycle ;
\draw   (330,75) .. controls (330,66.72) and (336.72,60) .. (345,60) .. controls (353.28,60) and (360,66.72) .. (360,75) .. controls (360,83.28) and (353.28,90) .. (345,90) .. controls (336.72,90) and (330,83.28) .. (330,75) -- cycle ;
\draw   (380,125) .. controls (380,116.72) and (386.72,110) .. (395,110) .. controls (403.28,110) and (410,116.72) .. (410,125) .. controls (410,133.28) and (403.28,140) .. (395,140) .. controls (386.72,140) and (380,133.28) .. (380,125) -- cycle ;
\draw   (280,125) .. controls (280,116.72) and (286.72,110) .. (295,110) .. controls (303.28,110) and (310,116.72) .. (310,125) .. controls (310,133.28) and (303.28,140) .. (295,140) .. controls (286.72,140) and (280,133.28) .. (280,125) -- cycle ;
\draw   (310,185) .. controls (310,176.72) and (316.72,170) .. (325,170) .. controls (333.28,170) and (340,176.72) .. (340,185) .. controls (340,193.28) and (333.28,200) .. (325,200) .. controls (316.72,200) and (310,193.28) .. (310,185) -- cycle ;
\draw   (250,185) .. controls (250,176.72) and (256.72,170) .. (265,170) .. controls (273.28,170) and (280,176.72) .. (280,185) .. controls (280,193.28) and (273.28,200) .. (265,200) .. controls (256.72,200) and (250,193.28) .. (250,185) -- cycle ;
\draw   (410,185) .. controls (410,176.72) and (416.72,170) .. (425,170) .. controls (433.28,170) and (440,176.72) .. (440,185) .. controls (440,193.28) and (433.28,200) .. (425,200) .. controls (416.72,200) and (410,193.28) .. (410,185) -- cycle ;
\draw   (350,185) .. controls (350,176.72) and (356.72,170) .. (365,170) .. controls (373.28,170) and (380,176.72) .. (380,185) .. controls (380,193.28) and (373.28,200) .. (365,200) .. controls (356.72,200) and (350,193.28) .. (350,185) -- cycle ;
\draw    (235,40) -- (345,60) ;

\draw    (125,60) -- (235,40) ;

\draw    (125,90) -- (175,110) ;

\draw    (125,90) -- (75,110) ;

\draw    (175,140) -- (205,170) ;

\draw    (175,140) -- (145,170) ;

\draw    (75,140) -- (105,170) ;

\draw    (75,140) -- (45,170) ;

\draw    (345,90) -- (395,110) ;

\draw    (295,110) -- (345,90) ;

\draw    (395,140) -- (425,170) ;

\draw    (365,170) -- (395,140) ;

\draw    (295,140) -- (325,170) ;

\draw    (265,170) -- (295,140) ;

\draw (235,25) node   {$0$};
\draw (345,75) node   {$0$};
\draw (125,75) node   {$1$};
\draw (175,125) node   {$1$};
\draw (295,125) node   {$1$};
\draw (395,125) node   {$0$};
\draw (75,125) node   {$0$};
\draw (487,72) node [scale=1.2]  {$a$};
\draw (487,122) node [scale=1.2]  {$b$};
\draw (45,185) node [color={rgb, 255:red, 0; green, 0; blue, 0 }  ,opacity=1 ] [align=left] {\textcolor[rgb]{0,0,0}{{\large X}}};
\draw (105,185) node  [align=left] {\textcolor[rgb]{0,0,0}{{\large X}}};
\draw (145,185) node  [align=left] {\textcolor[rgb]{0,0,0}{{\large X}}};
\draw (265,185) node  [align=left] {\textcolor[rgb]{0,0,0}{{\large X}}};
\draw (365,185) node  [align=left] {\textcolor[rgb]{0,0,0}{{\large X}}};
\draw (425,185) node  [align=left] {\textcolor[rgb]{0,0,0}{{\large X}}};
\draw (205,185) node  [align=left] {{\large \textcolor[rgb]{0,0,0}{\checkmark}}};
\draw (325,185) node  [align=left] {{\large \textcolor[rgb]{0,0,0}{\checkmark}}};

\end{tikzpicture}

\caption{A simplified instance of the hidden layers problem. Each node is labeled 0 (left) or 1 (right). For layers $a$ and $b$, these labels correspond to the correct child node. Leaves 4 and 6 are thus the only two leaves consistent with the hidden layers $a$ and $b$. Note that a true instance of the hidden layers problem is much larger.}
\label{fig:hl}
\end{figure}

\citet{GKR16} proved that the hidden layers problem has high communication complexity. To do so, they constructed a specific distribution $P$ over user inputs for their result. When we want to specify the distribution $P$ over user inputs, we write $\hl(k, P)$.

\begin{lemma}[Theorem 1 in~\citet{GKR16}]
\label{lem:hlp_lb}
 There exists constant $k'$ and input distribution $P$ for $((a,f), (b,g))$ such that, for every $k \geq k'$ and protocol $\A$ with $\cc(\A) \leq 2^k$, $\P{P}{\A \text{ errs on } \hl(k,P)} \geq 1 - 2^{-k}$.
\end{lemma}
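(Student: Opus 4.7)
I would prove a distributional lower bound and invoke Yao's minimax principle to obtain the randomized statement. Take $P$ to sample $a$ uniformly from the even layer indices and $b$ uniformly from the odd layer indices, and, conditional on $(a,b)$, sample $f$ and $g$ independently and uniformly (each vertex in layer $a$ or $b$ gets a uniformly random outgoing edge). A deterministic protocol $\A$ with $\cc(\A) \leq 2^k$ partitions the input space into at most $2^{2^k}$ combinatorial rectangles, each associated with a single output leaf. It therefore suffices to upper bound, for every leaf $v$, the $P$-mass of rectangles $A \times B$ labeled $v$ on which $v$ is consistent with every input; summing these masses over $v$ and the $2^{2^k}$ rectangles gives the desired error bound.

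The combinatorial core is bounding the $P$-mass of a single correct rectangle. For $(a,f)$ to make $v$ consistent, $f$ must route toward $v$ at the (unique) layer-$a$ ancestor of $v$; conditioned on $a$ and averaged over a uniform $f$, this happens with probability $2^{-4k}$, and similarly on Bob's side. A naive product bound would yield rectangle mass at most $2^{-8k}$, which multiplied by $2^{2^k}$ is nowhere near the target $2^{-k}$ error. The real bound must exploit the tree's $\Theta(2^{rs})$ layers: the consistency constraints across many possible ancestor levels compound along a rectangle, yielding an exponential-in-$k$ reduction in mass once the rectangle structure and layer structure are correctly aligned. A natural auxiliary claim along the way is that any ``large'' rectangle must contain input pairs for which the unique forced leaf (if any) depends on which of Alice's or Bob's layer ``wins,'' so picking a single $v$ constant on the rectangle must fail on a substantial portion.

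The main obstacle, and where the technical weight of the proof in~\citet{GKR16} resides, is amplifying this per-layer consistency probability into the claimed exponential-in-$k$ bound. I would attempt this via a round-elimination or direct-sum argument on a sequence of sub-games indexed by levels of the tree, combined with the tower parameters $r = 2^{2^{8k}}$ and $s = 2^{8k}$, which are chosen so that the effective number of ``decisions'' the two players must coordinate on dwarfs any $2^k$-bit communication budget. A complementary ingredient would be an information-theoretic analysis showing that a transcript of length $2^k$ leaks too little about the hidden layers to localize a consistent leaf, essentially because the entropy of $(a,b)$ already exceeds the budget. Absent this careful amplification across the many layers --- and it is by no means routine --- only a much weaker communication lower bound seems reachable, so I would ultimately rely on the construction of~\citet{GKR16} rather than re-derive it from scratch.
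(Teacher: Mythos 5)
The paper does not prove this lemma; it is imported verbatim as Theorem 1 of~\citet{GKR16}, and your proposal likewise (and correctly) ends by deferring to that paper. So there is no paper-internal proof to compare against, and citing the source is the appropriate treatment here.

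That said, a few points in your sketch are worth flagging because they would mislead anyone trying to actually carry it out. First, the statement that a $2^k$-bit deterministic protocol partitions the input space into rectangles ``each associated with a single output leaf'' is not right for this problem: $\hl$ is a \emph{search} problem in which Alice and Bob output leaves separately, so on a fixed protocol transcript (i.e.\ within a rectangle $A \times B$) Alice's output can vary with her input $(a,f)$ and Bob's with $(b,g)$. Correctness requires that they output the \emph{same} consistent leaf, and --- as the paper emphasizes --- there are very many consistent leaves, so Alice can trivially find one on her own. The hardness is one of \emph{coordination}, not of locating a consistent leaf, and a rectangle argument must bound the probability that Alice's and Bob's output functions agree (and agree on something jointly consistent), not merely that some fixed $v$ is consistent. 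Second, the per-layer consistency probability of $2^{-4k}$ that you compute is not where the gain comes from in~\citet{GKR16}; their bound is information-theoretic, leveraging the relation between internal/external information and the doubly- and triply-exponential parameters $r,s$ to show that a short transcript cannot let the players coordinate. Your guess of ``round elimination or direct sum'' points in a plausible direction but is not the mechanism they use, and you explicitly acknowledge the amplification step is beyond the sketch. Given all this, treating Lemma~\ref{lem:hlp_lb} as a cited black box, exactly as the paper does, is the right call.
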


In particular, Lemma~\ref{lem:hlp_lb} implies that $\Omega(2^k)$ communication is necessary to achieve constant success probability for $\hl(k, P)$. Combining Theorem~\ref{thm:eq} and Lemma~\ref{lem:hlp_lb} gives the following corollary.

\begin{corollary}
\label{cor:lb}
	For $\eps = O(1)$ and $\gamma = \Omega(1)$, $\sac_{\gamma}^{\eps,S}\left(\hl(k, P)\right) = \Omega\left(\tfrac{2^k}{\eps^2}\right)$.
\end{corollary}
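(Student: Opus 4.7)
The plan is to combine Theorem~\ref{thm:eq} with Lemma~\ref{lem:hlp_lb} in a nearly mechanical way. The corollary asserts a sample complexity lower bound for sequentially interactive locally private protocols solving the hidden layers problem, and it drops out as a direct consequence of the equivalence to two-party communication complexity, combined with the pre-existing combinatorial lower bound of~\citet{GKR16}.

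First I would fix any constants $\gamma, \eta = \Omega(1)$ with $\gamma + \eta < 1$ (for instance, if the target $\gamma$ lies below $1/2$, take $\eta = (1-\gamma)/2$). Under the hypothesis $\eps = O(1)$, these constants satisfy the premises of Theorem~\ref{thm:eq}, so I can invoke its lower-bound direction to obtain $\sac_{\gamma}^{\eps,S}(\hl(k,P)) = \Omega\!\left(\tfrac{1}{\eps^2} \cdot \cc_{\gamma + \eta}(\hl(k,P))\right)$. The reduction in Lemma~\ref{lem:m_to_two} (which underlies this direction of Theorem~\ref{thm:eq}) makes no use of any structure of the communication problem itself, so this step is entirely generic.

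Next I would lower bound $\cc_{\gamma + \eta}(\hl(k,P))$ via Lemma~\ref{lem:hlp_lb}. That lemma states that for every $k \geq k'$ and every protocol $\A$ with $\cc(\A) \leq 2^k$, the error of $\A$ on $\hl(k,P)$ under $P$ is at least $1 - 2^{-k}$. Since $\gamma + \eta$ is a fixed constant strictly less than $1$, for all sufficiently large $k$ we have $1 - 2^{-k} > \gamma + \eta$, and hence no protocol with $\cc(\A) \leq 2^k$ can achieve error at most $\gamma + \eta$. Equivalently, $\cc_{\gamma + \eta}(\hl(k,P)) > 2^k$. Plugging this into the inequality from the previous paragraph yields $\sac_{\gamma}^{\eps,S}(\hl(k,P)) = \Omega\!\left(\tfrac{2^k}{\eps^2}\right)$, as claimed.

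The main obstacle is essentially nonexistent, since all of the heavy lifting sits in the two ingredients being composed. The only subtlety worth double-checking is that the $k$-threshold beyond which $1 - 2^{-k} > \gamma + \eta$ depends only on $\gamma$ and $\eta$, and not on $\eps$; this is what allows the final $\Omega(\cdot)$ bound to factor cleanly into $\Omega(2^k / \eps^2)$ uniformly over all $\eps = O(1)$.
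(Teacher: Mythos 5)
Your proof is correct and follows exactly the route the paper intends: apply the lower-bound direction of Theorem~\ref{thm:eq} (which comes from Lemma~\ref{lem:m_to_two} composed with Lemma~\ref{lem:bm15_good}) to reduce the sample complexity bound to a communication lower bound, then observe that Lemma~\ref{lem:hlp_lb} forces $\cc_{\gamma+\eta}(\hl(k,P)) > 2^k$ once $k$ is large enough that $1 - 2^{-k} > \gamma + \eta$. The paper dispatches this in a single sentence (``Combining Theorem~\ref{thm:eq} and Lemma~\ref{lem:hlp_lb} gives the following corollary''); you have simply spelled out the same composition, including the observation that the $k$-threshold depends only on $\gamma$ and $\eta$.
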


\subsection{Fully Interactive Upper Bound for $\hl$}
\label{subsec:si_fi_ub}
In this section we provide a fully interactive protocol \fiub~that solves $\hl(k)$ with a much better sample complexity. \fiub~works by greedily following a path from the root to a leaf, querying users to guide its path as it descends the tree.

Concretely, starting from the root, at each vertex $v$ encountered, for all $2^{4k}$ children $v_j$ of $v$, the analyst ``asks'' all $n$ users if $(v,v_j)$ is the labelled edge in that level, recalling that each vertex in a hidden layer specifies a single next edge to follow. To ``answer'', each user $x_i$ compares the level $\ell$ of vertex $v$ and edge $(v,v_j)$ to their own data and replies as follows using randomized response. If $x_{i,1} = \ell$ and $(v, v_j) \in x_{i,2}$, i.e. user $i$'s hidden layer is $\ell$, and $v$ is labelled with the $(v,v_j)$ edge, then the user ``votes yes'' and transmits a draw from $\Ber{e^{\eps'}/(e^{\eps'}+1)}$, where we write $\eps' = \eps/2$ for notational simplicity. Otherwise the user ``votes no'' and transmits a draw from $\Ber{1/(e^{\eps'}+1)}$. Based on the responses, the protocol then chooses an edge out of $v$ to follow, to obtain the next vertex in the path at level $\ell+1$. When the protocol reaches a leaf, it proposes this leaf as the solution. Pseudocode for \fiub~appears in Section~\ref{subsec:fiub_pseudo}.

Intuitively, during this process, the selection made by the protocol at any level that does not correspond to a hidden layer is irrelevant: it can follow any outgoing edge and still be on track to correctly solve the problem instance. To argue for correctness, all that is important is that for the two (unknown) levels that correspond to hidden layers, the protocol correctly identifies the correct labeled edge. At each of those levels, the bias induced by randomized response will be enough to identify the correct edge with high probability. While this protocol is run, each user answers a very large (triply exponentially many in $k$) number of queries: but for only one of those queries is their response sampled from $\Ber{e^{\eps'}/(e^{\eps'}+1)}$. For all other queries, their response is sampled from $\Ber{1/(e^{\eps'}+1)}$. Hence the privacy loss over the whole protocol is constant, and does not accumulate with the number of queries. In contrast, a similar sequentially interactive protocol would require new users for each of this large number of queries.

Note that an implication of \cite{JMNR19} is that any fully interactive protocol that uses $r$ local randomizer calls per user can be converted into a sequentially interactive protocol with an $O(r)$ factor blowup in sample complexity. Consequently, it is necessary that any protocol witnessing an exponential separation between the sample complexities of fully and sequentially interactive protocols must make at least exponentially many queries per user.

\begin{theorem}
\label{thm:fi_ub}
	$\fiub$ is $(\eps,0)$-locally private and has constant success probability on $\hl(k)$  given $n = \Omega\left(\tfrac{k}{\eps^2}\right)$ samples.
\end{theorem}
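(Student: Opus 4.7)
The plan is to establish privacy and utility separately. For privacy, the crucial structural fact is that under the rules of $\fiub$, each user ever votes ``yes'' on at most one query across the entire adaptive execution, regardless of how many rounds the protocol runs. A user with data $x_i = (\ell, \phi)$ votes yes only on the single query that asks about the edge $(v, \phi(v))$, where $v$ is the vertex visited by the protocol at level $\ell$ (and $v$ is determined by the transcript prefix). Every other query receives a ``no'' vote, meaning a draw from $\Ber{1/(e^{\eps'}+1)}$, a distribution that does not depend on $x_i$ at all.

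I would turn this observation into a privacy bound directly. For neighboring databases $S, S'$ differing only in user $i$'s data, and any transcript $\pi$, the likelihood ratio factorizes over the steps $t$ at which $u_t = i$:
\[
\frac{\Pr[\pi \mid S]}{\Pr[\pi \mid S']} \;=\; \prod_{t : u_t = i} \frac{\Pr[y_t \mid \pi_{<t}, x_i]}{\Pr[y_t \mid \pi_{<t}, x_i']}.
\]
Each factor equals $1$ unless at least one of $x_i, x_i'$ would elicit a ``yes'' vote at that query, in which case the factor is between $e^{-\eps'}$ and $e^{\eps'}$. By the observation above, at most two factors are non-trivial (one for each of $x_i, x_i'$), so the entire ratio is at most $e^{2\eps'} = e^{\eps}$, yielding $(\eps, 0)$-local differential privacy.

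For utility, I would show that $\fiub$ selects the correct labeled edge at each of the two hidden layers $a, b$ with high probability; at every non-hidden layer all $n$ users cast ``no'' votes, so whichever edge is chosen is vacuously consistent with the hidden-layer constraints. Fix a hidden layer (say $a$), let $v$ be the vertex visited there, and let $(v, v^*)$ be the correct edge. Conditioning on the high-probability event that the number of Alice-type users $n_A$ lies within $O(\sqrt{n})$ of $n/2$, a direct calculation shows that the count $C_{j^*}$ of ones on the correct query satisfies $\mathbb{E}[C_{j^*}] \approx n/2$, while each wrong-edge count satisfies $\mathbb{E}[C_j] = n/(e^{\eps'}+1)$; since $(e^{\eps'}-1)/(e^{\eps'}+1) = \Theta(\eps)$ for $\eps = O(1)$, the mean gap is $\Omega(n\eps)$.

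Because each $C_j$ is a sum of $n$ independent $\{0,1\}$-valued random variables, Hoeffding's inequality gives $\Pr[|C_j - \mathbb{E}[C_j]| \geq t] \leq 2\exp(-2t^2/n)$. Setting $t$ to half the mean gap and union-bounding over the $2^{4k}$ outgoing edges at $v$ and over the two hidden layers bounds the overall failure probability by $O(2^{4k}\exp(-\Omega(n\eps^2)))$, which is a small constant once $n = \Omega(k/\eps^2)$. The main obstacle is the privacy analysis rather than the utility argument: because users respond to triply-exponentially many queries in $k$, a naive composition over queries would yield a vacuous bound, and the whole argument rests on the combinatorial observation that any fixed user can be ``yes''-aligned with at most a single query during the entire descent of the tree.
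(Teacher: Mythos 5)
Your proof is correct and takes essentially the same approach as the paper: the privacy argument hinges on exactly the observation the paper makes (each user's transcript distribution deviates from the all-``no'' baseline at at most one position per input value, so at most two factors in the likelihood ratio are non-trivial, each at most $e^{\eps'}$), and the utility argument is the paper's Claim~\ref{claim:hist} unrolled into raw counts with the same Hoeffding-plus-union-bound over the $O(2^{4k})$ hidden-layer queries, with the same observation that non-hidden layers are irrelevant to correctness.
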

\begin{proof}
	We first prove privacy. Recall that each user draws one of two samples, $(a,f)$ or $(b,g)$. Accordingly, each user has only a single point in the transcript where they output a sample from $\Ber{\tfrac{e^{\eps'}}{e^{\eps'}+1}}$, and the remainder of their outputs are all samples from $\Ber{\tfrac{1}{e^{\eps'}+1}}$. Thus, if we compare the transcript distributions (restricted to a single user with the specified data) of $\pi(a,f)$ and $\pi(b,g)$, there are at most two points in the transcript where their output distributions are not identical. Therefore for any single-user transcript output $z$, $$\tfrac{\P{}{\pi(a,f) = z}}{\P{}{\pi(b,g) = z}} \leq \tfrac{\tfrac{e^{\eps}}{(e^{\eps'}+1)^2}}{\tfrac{1}{(e^{\eps'}+1)^2}} \leq e^{\eps}.$$ $\fiub$ is thus $(\eps,0)$-locally private.
	
To reason about accuracy, we use the following simple result for randomized response.
	\begin{claim}
	\label{claim:hist}
		Let $x_1, \ldots, x_n \in \{0,1\}$ and for each $x_i$ draw $y_i \sim \Ber{e^{x_i \eps}/(e^\eps+1)}$. Let $y = \tfrac{1}{n} \cdot \sum_{i=1}^n x_i$ and $\bar y = \tfrac{1}{n} \cdot \tfrac{e^\eps+1}{e^\eps-1} \cdot \left(\sum_{i=1}^n y_i - \tfrac{n}{e^\eps+1}\right)$. Then with probability $\geq 1 - \beta$, $$|y - \bar y| \leq \tfrac{\eps + 2}{\eps\sqrt{2}} \sqrt{\ln(4/\beta)/n}.$$
	\end{claim}
	\begin{proof}
		$\E{}{\sum_i y_i} = \tfrac{y e^\eps + (n-y)}{e^\eps + 1} = \tfrac{y(e^\eps-1) + n}{e^\eps+1}$, so two Hoeffding bounds implies that with probability $\geq 1-\beta$, $$\left|\sum_i y_i -  \tfrac{y(e^\eps-1) + n}{e^\eps+1}\right| \leq \sqrt{n\ln(4/\beta)/2}.$$ Thus $$|y - \bar y| \leq \tfrac{e^\eps+1}{e^\eps-1}\sqrt{\ln(4/\beta)/2n} < \tfrac{\eps + 2}{\eps\sqrt{2}}\sqrt{\ln(4/\beta)/n}.$$
	\end{proof}
	
By Claim~\ref{claim:hist}, with probability $\geq 1-\beta/2$, whenever $\ell \in \{a,b\}$ but $(v,v_j) \not \in f \cup g$, i.e. whenever the current layer is hidden but $v_j$ is the wrong child node, $$\bar y \leq \tfrac{\eps' + 2}{\eps'\sqrt{2}}\sqrt{\tfrac{\ln(2^{4k+3}/\beta)}{n}} < \tfrac{\eps' + 2}{\eps'\sqrt{2}} \sqrt{\tfrac{(4k+2) + \ln(1/\beta)}{n}} < 0.1$$ where the last inequality uses $n > 100 \left(\tfrac{\eps' + 2}{\eps'\sqrt{2}}\right)^2 (4k+2 + \ln(1/\beta))$.

Now consider the situation where $\ell = a$ ($\ell = b$ is symmetric) and $(v,v_j) \in f$, i.e. the current layer is hidden and $v_j$ is the correct child node. Let $A = \{i \mid x_i = (a,f)\}$. First, by a Hoeffding bound with probability $\geq 1 - \beta/4$, $|A| \geq \tfrac{n}{2} - \sqrt{n\ln(4/\beta)/2}$. Thus by Claim~\ref{claim:hist} $$\bar y \geq \tfrac{1}{2} - \sqrt{\tfrac{\ln(4/\beta)}{2n}} - \tfrac{\eps' + 2}{\eps'\sqrt{2}} \sqrt{\tfrac{(4k+2) + \ln(1/\beta)}{n}} > 0.2$$ where the last inequality uses the above lower bound on $n$ and $n > 25\ln(4/\beta)$. By a union bound, with probability $\geq 1 - \beta$ when $(v,v_j) \in f$ then $\bar y > 0.2$ and $v_j$ is correctly chosen as the next child.
\end{proof}

Combining Corollary~\ref{cor:lb} and Theorem~\ref{thm:fi_ub} yields an exponential in $k$ separation between sequentially and fully interactive protocols achieving constant success probability on $\hl$. 
\section{Second Reduction:  Round-Specific Separation}
\label{sec:round_specific}
The equivalence given in Theorem~\ref{thm:eq} has one drawback: it does not preserve round complexity\footnote{In particular, the conversion between noiseless and noisy communication blows up round complexity in both directions.}. As a result, we cannot use it to prove a separation between $(k-1)$ and $k$-round sequentially interactive protocols. To do this, we instead prove a slightly different one-way reduction with a looser dependence on $\eps$.

\subsection{Second Reduction}
\label{subsec:red}

\begin{theorem}
\label{thm:red}
	Let $\A$ be a sequentially interactive $\eps$-locally private protocol computing relation $\R$ with error probability $\gamma = \Omega(1)$, and let $0 < \eta = \Omega(1)$ such that $\gamma + \eta < 1$. Then there exists a two-party protocol $\A'$ computing $\R$ with error probability $\leq \gamma + \eta$ such that $\cc(\A') = \tilde O\left(e^\eps \cdot \sac(\A)\right)$. Moreover, $\A$ and $\A'$ have identical round complexity.
\end{theorem}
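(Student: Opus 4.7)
The plan is to simulate $\A$ directly by a two-party protocol $\A'$ using rejection sampling over shared public randomness, preserving round complexity on the nose at the cost of a looser $e^\eps$ factor in communication. The construction has two key steps. First, Alice and Bob use public randomness to independently assign each of the $n = \sac(\A)$ users to one of them with probability $1/2$; this exactly reproduces the multi-party input distribution, in which each user independently draws data from $\{x,y\}$. Second, in each round $t$ of $\A$, for each user $i \in S_U^t$ the party $P_i$ simulating user $i$ must effectively announce a sample from $R_i(d_{P_i})$, where $R_i$ is the randomizer assigned by $\A$ and $d_{P_i} \in \{x,y\}$ is that party's global input.

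Rather than transmit the (possibly long) randomizer output directly, we exploit the $\eps$-DP guarantee. Fix any reference point $x^* \in \X$; using public randomness both parties generate i.i.d.\ samples $Y_1, Y_2, \ldots \sim R_i(x^*)$. Because $R_i$ is $\eps$-DP with $\delta = 0$, all $R_i(d)$ share a common support and satisfy $R_i(d)(y)/R_i(x^*)(y) \leq e^\eps$, so $P_i$ may accept $Y_j$ with probability $R_i(d_{P_i})(Y_j)/(e^\eps R_i(x^*)(Y_j)) \in [0,1]$. $P_i$ then transmits the index $J_i$ of the first accepted $Y_j$; the other party reads $Y_{J_i}$ off the shared tape and treats it as user $i$'s transcript output. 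A standard rejection-sampling calculation shows $Y_{J_i} \sim R_i(d_{P_i})$ exactly, so $\A'$ induces the same transcript distribution as $\A$ on the multi-party instance defined by $(x,y)$.

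Each $J_i$ is geometric with success probability $1/e^\eps$, so $\mathbb{E}[J_i] = e^\eps$; a Markov bound on $\sum_i J_i$ gives $\max_i J_i \leq \sum_i J_i \leq e^\eps n / \eta$ with probability at least $1 - \eta$. On this good event each index takes $O(\log(e^\eps n / \eta))$ bits, so total communication is $\tilde O(e^\eps \sac(\A))$ after absorbing the logarithmic factor; on the bad event $\A'$ outputs arbitrarily, raising the total error from $\gamma$ to $\gamma + \eta$. For round complexity, each round $t$ of $\A$ becomes one round of $\A'$ in which both parties simultaneously broadcast the indices $\{J_i\}_{i \in S_U^t}$ for their assigned users. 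Because $\A$'s round-$t$ randomizers are determined by the round-$(t-1)$ transcript, which both parties can reconstruct from past indices and the public tape, $\A'$ can execute round $t$ using only round-$(t-1)$ information, so the number of rounds is preserved exactly.

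The main subtlety is achieving \emph{exact} distributional equivalence with $\A$ (rather than an approximation) while keeping the transmitted indices polynomial-sized with high probability. Both rely crucially on $\eps$-local pure privacy: it guarantees the common support and the clean $e^\eps$ domination needed for rejection sampling to be exact on-the-nose, and it pins $\mathbb{E}[J_i] = e^\eps$ so that a single Markov step, paid for by the $\eta$ slack in the error, controls the geometric tail. The looser dependence on $\eps$ relative to Theorem~\ref{thm:eq} is inherent to this simulation-based approach, since we no longer go through the $\eps^2$ savings from the noisy-channel compression of Lemma~\ref{lem:bm15}.
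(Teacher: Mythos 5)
Your proof is correct and takes a genuinely different route from the paper's. The paper invokes the Bassily--Smith ``$\obp$'' transformation (Lemma~\ref{lem:one_bit}) as a black box to convert $\A$ into a protocol $\A_1$ where each user emits a single bit at an $O(e^\eps)$ expected-sample blowup, and then randomly partitions the users of $\A_1$ between Alice and Bob; the per-user communication is then trivially one bit. You instead simulate $\A$ directly: you partition the (unmodified) users between Alice and Bob, and replace each randomizer output by a rejection-sampling index against a publicly-generated reference stream $Y_1, Y_2, \ldots \sim R_i(x^*)$, exploiting the pure-DP domination $R_i(d)(y) \leq e^\eps R_i(x^*)(y)$ to get exact samples with geometric index of mean $e^\eps$. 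In effect you are reproving the compression content of Lemma~\ref{lem:one_bit} inline rather than citing it, which makes the argument more self-contained and in fact gives a slightly sharper bound: each of the $\sac(\A)$ indices is at most $e^\eps n/\eta$ on the good event, so the total is $O\bigl(n(\eps + \log(n/\eta))\bigr)$, which dominates the stated $\tilde O(e^\eps \cdot \sac(\A))$ only because $\eps \leq e^\eps$. Your round-preservation argument matches the paper's: both parties can reconstruct the simulated transcript from past indices and the shared tape, so each round of $\A$ maps to one simultaneous round of $\A'$. One small point worth making explicit (which your error-budget step already implicitly handles): since $\cc(\A')$ is a worst-case quantity over random coins and the geometric indices are unbounded, $\A'$ must truncate --- abort and output arbitrarily if any $J_i$ exceeds $e^\eps n/\eta$ --- and your Markov/$\eta$ accounting is exactly what pays for that truncation.
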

\begin{proof}
	We start by using Lemma~\ref{lem:one_bit} to transform $\A$ into a different protocol $\A_1$ where each user sends a single bit. $\A_1$ has the same round complexity as $\A$ but incurs an $O(e^\eps)$ blowup in expected sample complexity. As before, we transform this into a bound that holds with probability $1-\eta$ at the cost of an increase in the failure probability by $\eta$ and a constant $1/\eta$ blowup in sample complexity.

	Next, we transform $\A_1$ into a two-party protocol $\A'$ for $\R$. To do so, we run $\A_1$ and, for each round of randomizer calls, uniformly at random assign (i.e. using public randomness generated ahead of time) each randomizer call $R$ to one of Alice and Bob for execution on their own data. Alice and Bob then release the corresponding outputs simultaneously, and we proceed to the next round of $\A_1$. Since $\A_1$ is sequentially interactive, each randomizer call is made using an unknown new sample $s$. Furthermore, since each new sample is drawn from a uniform distribution as $s \sim_U \{x,y\}$, $R(s)$ is distributed identically to $R$ executed on a uniformly random choice of Alice and Bob. At the conclusion of $\A'$, Alice and Bob mimic the output of the analyst. Since this choice depends entirely on the communications thus far, which are distributed identically in $\A_1$ and $\A'$, the distribution over final answers is identical as well. Since $\A_1$ used $O(e^\eps \cdot |\sac(\A)|)$ bits of communication, $\A'$ does as well. Finally, since each round of $\A_1$ leads to exactly one round of $\A'$, $\A$ and $\A'$ have identical round complexity.
\end{proof}

\section{Round-specific Separation for Sequential Interactivity}
\label{sec:round_sep}
In this section, we introduce the \emph{pointer chasing} problem and combine it with Theorem~\ref{thm:red} to prove a round-specific exponential sample complexity separation for sequentially interactive local privacy.

\subsection{The Pointer Chasing Problem}
\label{subsec:pointer_chasing}
In the pointer chasing problem $\pcp(k, \ell)$, Alice and Bob respectively receive vectors $a$ and $b$ in $[\ell]^\ell$ representing a sequence of pointers to locations in the other vector. The goal is to follow the chain of pointers starting from $a[1]$ and determine the value of the $k$'th pointer in this chain, where we focus on problem instances with $k \ll \ell$. The difficulty is that while Alice knows $a[1]$, to compute the third pointer location she needs to know $b[a[1]]$. Pointer-chasing is therefore a natural candidate for forcing interaction for nontrivial --- i.e., communication $o(k \log(\ell))$ --- solutions. An illustration appears in Figure~\ref{fig:pc}.

\begin{figure}[h]
\centering

\tikzset{every picture/.style={line width=0.75pt}} 

\begin{tikzpicture}[x=0.75pt,y=0.75pt,yscale=-1,xscale=1]

\draw   (50,30) -- (90,30) -- (90,70) -- (50,70) -- cycle ;
\draw   (90,30) -- (130,30) -- (130,70) -- (90,70) -- cycle ;
\draw   (130,30) -- (170,30) -- (170,70) -- (130,70) -- cycle ;
\draw   (170,30) -- (210,30) -- (210,70) -- (170,70) -- cycle ;
\draw   (210,30) -- (250,30) -- (250,70) -- (210,70) -- cycle ;
\draw   (250,30) -- (290,30) -- (290,70) -- (250,70) -- cycle ;
\draw   (290,30) -- (330,30) -- (330,70) -- (290,70) -- cycle ;
\draw   (330,30) -- (370,30) -- (370,70) -- (330,70) -- cycle ;
\draw   (50,150) -- (90,150) -- (90,190) -- (50,190) -- cycle ;
\draw   (90,150) -- (130,150) -- (130,190) -- (90,190) -- cycle ;
\draw   (130,150) -- (170,150) -- (170,190) -- (130,190) -- cycle ;
\draw   (170,150) -- (210,150) -- (210,190) -- (170,190) -- cycle ;
\draw   (210,150) -- (250,150) -- (250,190) -- (210,190) -- cycle ;
\draw   (250,150) -- (290,150) -- (290,190) -- (250,190) -- cycle ;
\draw   (290,150) -- (330,150) -- (330,190) -- (290,190) -- cycle ;
\draw   (330,150) -- (370,150) -- (370,190) -- (330,190) -- cycle ;
\draw    (70,70) -- (348.08,149.45) ;
\draw [shift={(350,150)}, rotate = 195.95] [fill={rgb, 255:red, 0; green, 0; blue, 0 }  ][line width=0.75]  [draw opacity=0] (8.93,-4.29) -- (0,0) -- (8.93,4.29) -- cycle    ;

\draw    (350,150) -- (231.66,71.11) ;
\draw [shift={(230,70)}, rotate = 393.69] [fill={rgb, 255:red, 0; green, 0; blue, 0 }  ][line width=0.75]  [draw opacity=0] (8.93,-4.29) -- (0,0) -- (8.93,4.29) -- cycle    ;

\draw    (230,70) -- (111.66,148.89) ;
\draw [shift={(110,150)}, rotate = 326.31] [fill={rgb, 255:red, 0; green, 0; blue, 0 }  ][line width=0.75]  [draw opacity=0] (8.93,-4.29) -- (0,0) -- (8.93,4.29) -- cycle    ;

\draw    (110,150) -- (110,72) ;
\draw [shift={(110,70)}, rotate = 450] [fill={rgb, 255:red, 0; green, 0; blue, 0 }  ][line width=0.75]  [draw opacity=0] (8.93,-4.29) -- (0,0) -- (8.93,4.29) -- cycle    ;

\draw    (110,70) -- (268.21,149.11) ;
\draw [shift={(270,150)}, rotate = 206.57] [fill={rgb, 255:red, 0; green, 0; blue, 0 }  ][line width=0.75]  [draw opacity=0] (8.93,-4.29) -- (0,0) -- (8.93,4.29) -- cycle    ;

\draw (27,48.46) node   {$a$};
\draw (27,171.54) node   {$b$};
\draw (70,50) node   {$8$};
\draw (110,50) node   {$6$};
\draw (150,50) node   {$5$};
\draw (190,50) node   {$1$};
\draw (230,50) node   {$2$};
\draw (270,50) node   {$4$};
\draw (310,50) node   {$3$};
\draw (350,50) node   {$7$};
\draw (70,170) node   {$1$};
\draw (110,170) node   {$2$};
\draw (150,170) node   {$4$};
\draw (190,170) node   {$6$};
\draw (230,170) node   {$7$};
\draw (270,170) node   {$8$};
\draw (310,170) node   {$3$};
\draw (350,170) node   {$5$};

\end{tikzpicture}

\caption{An instance of pointer chasing $\pcp(5,8)$ with solution 8.}
\label{fig:pc}
\end{figure}
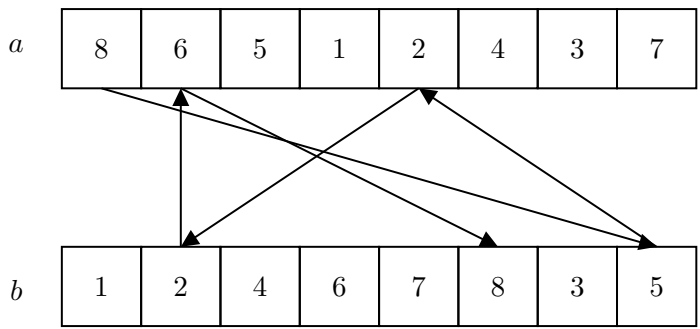

\citet{NW93} proved the following communication lower bound for $\pcp(k,\ell)$.

\begin{lemma}[Theorem 2.6 in~\citet{NW93}]
\label{lem:pc_lb}
	Let $k < \tfrac{\ell}{\log(\ell)}$. For any $k$-round protocol $\A$ with success probability $\geq 2/3$ on $\pcp(k,\ell)$ where Bob speaks first, $\cc(\A) = \Omega\left(\tfrac{\ell}{k^2}\right)$.
\end{lemma}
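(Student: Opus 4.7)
The plan is to prove this bound via the classical round-elimination technique originating with Nisan and Wigderson, working in the distributional setting. By Yao's minimax principle, it suffices to lower-bound the communication of deterministic protocols under a hard distribution $\mu$; I would take $\mu$ to be the uniform distribution over $(a,b) \in [\ell]^\ell \times [\ell]^\ell$. Under $\mu$, the pointer chain $z_1 = a[1], z_2 = b[z_1], z_3 = a[z_2], \ldots$ visits a sequence of nearly independent and uniform positions (since $k \ll \ell$, collisions are rare), so the target $z_k$ is close to uniform on $[\ell]$ and therefore hard to guess without substantial communication.

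The central technical step is a round-elimination lemma: any deterministic $k$-round protocol $\Pi$ for $\pcp(k, \ell)$ with Bob speaking first, total communication $s$, and success probability $1 - \alpha$ under $\mu$ can be converted into a deterministic $(k-1)$-round protocol $\Pi'$ for $\pcp(k-1, \ell)$ with Alice speaking first, communication $s - s_1$ (where $s_1$ is the length of Bob's first message), and success probability $\geq 1 - \alpha - \delta$ for some $\delta = \tilde{O}(\sqrt{s_1/\ell})$. To build $\Pi'$, I would identify a value $m^*$ of Bob's first message whose preimage $B^* \subseteq [\ell]^\ell$ has mass $\geq 2^{-s_1}$, hardcode $M_1 = m^*$ into $\Pi$, and begin the simulation from Alice's first message. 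A $(k-1)$-round protocol starting with Alice that computes the $k$-th pointer is, after absorbing the trivial first hop $a[1]$, the same as a $(k-1)$-round protocol starting with Bob that computes the $(k-1)$-th pointer, so the reduction closes under iteration.

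Iterating elimination $k$ times reduces $\Pi$ to a $0$-round protocol $\Pi^{(k)}$ whose output depends on only one player's input; since $z_k$ is near-uniform on $[\ell]$ and essentially independent of any such view, $\Pi^{(k)}$ succeeds with probability at most $1/\ell + o(1)$. Combining this with the assumed success probability $\geq 2/3$ of $\Pi$ gives $2/3 - \sum_{i=1}^k \delta_i \leq 1/\ell + o(1)$. Plugging in $\delta_i = \tilde{O}(\sqrt{s_i/\ell})$ and applying Cauchy--Schwarz to $\sum_{i=1}^k \sqrt{s_i/\ell} \leq \sqrt{k s/\ell}$, together with the hypothesis $k < \ell/\log \ell$ (which bounds the logarithmic factors and justifies treating the chain as near-uniform), forces $s = \Omega(\ell/k^2)$.

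The main obstacle is establishing the round-elimination lemma with the right error control --- specifically, showing that conditioning $b$ on $b \in B^*$ does not substantially distort the pointer-chasing instance seen by the chain. I would argue via entropy: since $B^*$ has mass at least $2^{-s_1}$, the entropy drop induced by conditioning is at most $s_1$, and by the chain rule this drop is spread across the $\ell$ coordinates of $b$, giving an average per-coordinate KL divergence from uniform of $O(s_1/\ell)$; Pinsker's inequality converts this to an $O(\sqrt{s_1/\ell})$ per-coordinate total-variation bound. A union bound over the $k$ coordinates visited by the chain, together with a ``typical coordinate'' argument ensuring the chain avoids the atypical indices with high probability, controls the overall error increment $\delta_i$ and drives the final bound.
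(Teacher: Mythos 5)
This lemma is cited directly from~\citet{NW93}; the paper does not supply its own proof, so there is no internal argument to compare against. Your round-elimination roadmap is in the right family of techniques --- NW93 and later work on pointer chasing use exactly this blueprint: pass to a hard distribution, fix a popular first message for the player speaking ``too early,'' argue the conditioned instance still embeds a smaller pointer-chasing problem with roles swapped, and iterate down to zero rounds.

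The gap sits exactly where you flag ``the main obstacle,'' and it is revealed by your own arithmetic. You posit a per-round error increment $\delta_i = \tilde{O}(\sqrt{s_i/\ell})$ and apply Cauchy--Schwarz to get $\sum_i \delta_i \le \tilde{O}(\sqrt{ks/\ell})$. Solving $\sqrt{ks/\ell} \gtrsim 1$ gives $s = \tilde\Omega(\ell/k)$, which is \emph{stronger} than the $\Omega(\ell/k^2)$ you then write down as the conclusion; if the round-elimination lemma held with that error bound, you would have strictly improved on NW93. Such improvements do exist in the literature, but they required substantially more delicate information-theoretic machinery than what you sketch. The reason the original NW93 argument only yields $\ell/k^2$ is precisely the circularity you acknowledge but do not resolve: after conditioning on $b \in B^*$, the next pointer $z_2 = b[z_1]$ is determined by the \emph{conditioned} distribution of $b$, so the chain is not sampling coordinates independently of the distortion --- its trajectory is correlated with the conditioning and may preferentially land on the atypical coordinates where per-coordinate divergence is large. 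A ``Markov plus union bound over the $k$ coordinates visited'' does not close this, because the visited index set is not a uniformly random $k$-subset of $[\ell]$. Handling this dependency --- e.g., by tracking the conditional entropy of the \emph{current pointer value} round by round rather than per-coordinate KL from uniform --- is exactly where the extra factor of $k$ is lost and what brings the bound down to the claimed $\Omega(\ell/k^2)$. As written, the central round-elimination lemma is asserted, not established, and the error parameter you assert for it does not match the theorem you conclude.
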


One obstacle remains before combining Theorem~\ref{thm:red} with Lemma~\ref{lem:pc_lb}. The transformation given in Theorem~\ref{thm:red} assumes that each round of communication features Alice and Bob releasing outputs \emph{simultaneously}. In contrast, the model used for Lemma~\ref{lem:pc_lb} supposes that Alice and Bob speak in \emph{alternating} rounds. We address this gap in Lemma~\ref{lem:diff_rounds}.

\begin{lemma}
\label{lem:diff_rounds}
	In the two-party communication model, any $(k-1)$-round simultaneous communication protocol can be simulated by a $k$-round alternating communication protocol with Bob speaking first.
\end{lemma}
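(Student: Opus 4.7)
The plan is to exhibit an explicit alternating simulation that stockpiles messages on one side whenever a party gets a new message from the other side. Let the given simultaneous protocol produce messages $a_1,\dots,a_{k-1}$ from Alice and $b_1,\dots,b_{k-1}$ from Bob, where $a_t$ is a function of $(x, b_1,\dots,b_{t-1})$ and $b_t$ is a function of $(y, a_1,\dots,a_{t-1})$. I would have Bob speak first and send $b_1$; this is legitimate because $b_1$ depends only on $y$. In round $2$, Alice knows $b_1$, so she can compute both $a_1$ (a function of $x$ alone) and $a_2$ (a function of $x$ and $b_1$), and she sends the concatenation. In round $3$, Bob has observed $a_1,a_2$ and can therefore compute $b_2$ and $b_3$; he sends both. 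In general, each subsequent speaker ``catches up'' by transmitting the next two messages of their side of the simultaneous protocol.

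To verify correctness I would prove by induction on $r$ that after round $r$ of the alternating protocol, the speaker of round $r$ has transmitted exactly the first $r$ messages of their simultaneous stream, while the other party has transmitted exactly the first $r-1$. The base case $r=1$ holds by construction. For the inductive step, the causal structure of the simultaneous protocol guarantees that each of the two messages the current speaker is about to send depends only on their private input and on messages the other party has already delivered; by the inductive hypothesis those are precisely the messages they have seen so far. Hence the new messages are well defined and have the same distribution as in the original protocol, and the transcripts of the two protocols are in bijection.

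After $k$ alternating rounds, one party has sent $k$ messages and the other $k-1$, a total of $2k-1$, which is one more than the $2(k-1)$ messages produced by the $(k-1)$-round simultaneous protocol. I would simply omit this extra trailing message: whichever party speaks in round $k$ sends only the single message $a_{k-1}$ or $b_{k-1}$ that is still missing from the simulated transcript. This yields a faithful simulation using exactly $k$ alternating rounds, beginning with Bob, as required. There is no real obstacle here beyond careful bookkeeping of dependencies, and the induction above is what makes that bookkeeping clean.
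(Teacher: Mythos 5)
Your proposal is correct and takes essentially the same route as the paper: Bob sends $B_1$, then Alice sends $A_1,A_2$, then Bob sends $B_2,B_3$, and so on, exploiting the fact that in the simultaneous protocol each $A_t$ depends only on $(B_1,\dots,B_{t-1})$ and Alice's input (and symmetrically for $B_t$). Your induction and the explicit handling of the shortened final round are just a more careful write-up of the same idea the paper sketches.
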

\begin{proof}
	In the simultaneous protocol, let $(A_1, B_1), (A_2, B_2), \ldots$ denote the outputs produced in each round. Since the outputs are simultaneous, each $A_t$ depends only on $(A_1, B_1), \ldots, (A_{t-1}, B_{t-1})$ and Alice's data, and a similar dependence holds for $B_t$ and Bob's data. Alice and Bob can therefore simulate a simultaneous communication protocol with alternating communication protocol as follows: Bob begins by outputting $B_1$, then Alice outputs $A_1$ and $A_2$, then Bob outputs $B_2$ and $B_3$, and so on. This induces an identical distribution over final outputs $A_{k-1}$ and $B_{k-1}$ at the expense of an additional round.
\end{proof}

Combining Lemma~\ref{lem:pc_lb} with Theorem~\ref{thm:red} yields Corollary~\ref{cor:pc_ni_lb}.

\begin{corollary}
\label{cor:pc_ni_lb}
	Let $\A$ be a $(k-1)$-round sequentially interactive $\eps$-locally private protocol solving $\pcp(k,\ell)$ with error probability $\gamma \leq 1/3 - c$ for $c = \Omega(1)$. Then $\sac(\A) = \Omega\left(\tfrac{\ell}{e^\eps k^2}\right)$.
\end{corollary}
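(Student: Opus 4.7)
The plan is to chain together Theorem~\ref{thm:red}, Lemma~\ref{lem:diff_rounds}, and Lemma~\ref{lem:pc_lb}, essentially using contradiction: suppose that $\A$ is a $(k-1)$-round sequentially interactive $\eps$-locally private protocol that solves $\pcp(k,\ell)$ with error at most $\gamma \leq 1/3 - c$. Since $c = \Omega(1)$, I can set $\eta = c/2$ so that $\gamma + \eta \leq 1/3 - c/2$ is bounded away from $1$ and in particular at most $1/3$.

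First, I would apply Theorem~\ref{thm:red} to $\A$ with this choice of $\eta$. This produces a two-party simultaneous-communication protocol $\A'$ that computes $\pcp(k,\ell)$ with error at most $\gamma + \eta \leq 1/3$, has $\cc(\A') = \tilde O(e^\eps \sac(\A))$, and (crucially) has the same round complexity as $\A$ -- namely $k-1$ rounds. Next, I would invoke Lemma~\ref{lem:diff_rounds} to simulate the $(k-1)$-round simultaneous-communication protocol $\A'$ by a $k$-round alternating-communication protocol $\A''$ in which Bob speaks first. The simulation preserves the distribution of final outputs, so $\A''$ still computes $\pcp(k,\ell)$ with error at most $1/3$, and its communication is at most that of $\A'$ (up to constants), i.e. $\cc(\A'') = \tilde O(e^\eps \sac(\A))$.

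Now $\A''$ exactly satisfies the hypotheses of Lemma~\ref{lem:pc_lb}: it is a $k$-round protocol for $\pcp(k,\ell)$ in which Bob speaks first, with success probability at least $2/3$. Lemma~\ref{lem:pc_lb} therefore gives $\cc(\A'') = \Omega(\ell/k^2)$. Combining this with the upper bound from the reduction yields
\[
\tilde O\bigl(e^\eps \sac(\A)\bigr) \;\geq\; \cc(\A'') \;=\; \Omega\!\left(\tfrac{\ell}{k^2}\right),
\]
and solving for $\sac(\A)$ produces the desired bound $\sac(\A) = \Omega\!\left(\tfrac{\ell}{e^\eps k^2}\right)$, absorbing polylogarithmic factors into the $\Omega$-notation as is standard.

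The substantive content is really in the three ingredients we are chaining, so the only nontrivial bookkeeping in this corollary is threefold: choosing $\eta$ so that the error $\gamma + \eta$ produced by Theorem~\ref{thm:red} is still at most $1/3$ (which requires $\gamma$ to be bounded away from $1/3$ by a constant, hence the hypothesis $\gamma \leq 1/3 - c$); matching the ``simultaneous'' round model delivered by Theorem~\ref{thm:red} to the ``alternating, Bob first'' model required by Lemma~\ref{lem:pc_lb}, which costs exactly one extra round via Lemma~\ref{lem:diff_rounds} and accounts for the off-by-one between $k-1$ and $k$; and verifying that the logarithmic overhead hidden in the $\tilde O$ from Theorem~\ref{thm:red} is immaterial for the stated asymptotic lower bound. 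The most delicate of these is the round-counting step, since the statement is specifically $(k-1)$-round LDP versus $k$-round pointer-chasing, and losing this +1 is what makes the corollary yield a round-specific hierarchy rather than a weaker separation.
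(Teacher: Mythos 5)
Your proof is correct and takes essentially the same approach as the paper, which simply states ``Combining Lemma~\ref{lem:pc_lb} with Theorem~\ref{thm:red} yields Corollary~\ref{cor:pc_ni_lb}'' after presenting Lemma~\ref{lem:diff_rounds} for exactly the simultaneous-to-alternating round conversion you describe. Your only slightly loose step is ``absorbing polylogarithmic factors into the $\Omega$-notation'' --- one cannot in general drop a polylog from a lower bound --- but this is moot here because the proof of Theorem~\ref{thm:red} actually delivers a clean $O(e^\eps \cdot \sac(\A))$ communication bound with no hidden logarithmic factors.
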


\subsection{$k$-round Upper Bound}
\label{subsec:si_ub}
We now provide a $k$-round sequentially interactive protocol, $\siub$, achieving much better sample complexity on $\pcp$. $\siub$ is a straightforward private version of the trivial $O(k\log(\ell))$ communication solution to $\pcp$ where Alice and Bob  alternate communicating the next pointer in sequence, bit by bit. Local privacy forces this communication to pass through randomized response for each bit, and sequential interaction forces the use of a new group of users for each bit. Pseudocode for $\siub$ appears in Appendix~\ref{subsec:siub_pseudo}. As both the privacy and accuracy proofs for Theorem~\ref{thm:si_ub} are nearly identical to those of Theorem~\ref{thm:fi_ub}, we defer them to Appendix~\ref{subsec:siub_proof}.

\begin{theorem}
\label{thm:si_ub}
	$\siub$ is $(\eps,0)$-locally private, solves $\pcp(k, \ell)$ with error $\leq 1/6$ and has $\sac(\A) = \tilde O\left(\tfrac{k\log(\ell)}{\eps^2}\right)$.
\end{theorem}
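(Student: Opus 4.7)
The plan is to mirror the structure of the proof of Theorem~\ref{thm:fi_ub} — privacy via single-use randomized response, accuracy via Claim~\ref{claim:hist} and a union bound — while exploiting sequential interactivity to avoid the privacy-per-user blowup that $\fiub$ had to absorb (which is why $\fiub$ used $\eps' = \eps/2$).

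For privacy, the key observation is that in $\siub$ each of the $k \log(\ell)$ bits of the chain $a[1], b[a[1]], a[b[a[1]]], \ldots$ is estimated from a disjoint, freshly-drawn batch of users, because sequential interactivity forbids reusing a user. Each user therefore emits exactly one bit, obtained by applying $\eps$-randomized response to the relevant bit of their own data (or to a default value such as $0$ if their data type — Alice-side $a$ versus Bob-side $b$ — does not match the side currently being read). Since every user participates in a single $\eps$-local randomizer call, $(\eps,0)$-local privacy follows directly from the standard analysis of randomized response without any composition argument.

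For accuracy, I would induct along the $k$-step pointer chain. At the start of step $t$, assume the transcript has correctly reconstructed the first $t-1$ pointers (so the current index is publicly known). Within step $t$ we read $\log(\ell)$ bits, each from its own sub-batch. Only the users whose data type matches the side being read carry signal; a Hoeffding bound guarantees that a constant fraction of any batch has the matching data type, and Claim~\ref{claim:hist}, applied to that sub-batch after subtracting the expected contribution of the non-matching users, recovers the target bit with failure probability at most $\beta$ from $O(\log(1/\beta)/\eps^2)$ users — essentially the same calculation that in the proof of Theorem~\ref{thm:fi_ub} separates $\bar y < 0.1$ from $\bar y > 0.2$. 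Choosing $\beta = \Theta(1/(k\log \ell))$ and taking a union bound over all $k \log \ell$ bits yields total error at most $1/6$ at a sample cost of $\tilde O(k \log(\ell)/\eps^2)$, matching the stated bound. The round complexity comes out to $k$: within a single round all $\log(\ell)$ bits of one pointer can be estimated in parallel by disjoint sub-batches, since each sub-batch needs only the transcript prefix (the current index), not the other bits being estimated simultaneously.

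The main obstacle is really just bookkeeping rather than any new technique: one must pin down what a data-type-mismatched user outputs so the estimator has the right expectation, verify the Hoeffding concentration of the matching sub-batch size, and track the union bound over $k \log(\ell)$ events. All three are already handled essentially verbatim in the proof of Theorem~\ref{thm:fi_ub}, which is presumably why the authors defer the full write-up to the appendix.
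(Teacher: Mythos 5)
Your proposal is correct and follows essentially the same route as the paper's deferred proof in Appendix~\ref{subsec:siub_proof}: privacy via a single $\eps$-randomized-response call per user (no composition, hence no $\eps/2$ factor), accuracy via Claim~\ref{claim:hist} applied independently to each of the $k\lceil\log\ell\rceil$ bit-queries with $\beta = \Theta(1/(k\log\ell))$ and a union bound, giving $\tilde O(k\log(\ell)/\eps^2)$ samples. One small quibble on presentation: the estimator $\bar y$ in the pseudocode is the debiased mean over the \emph{entire} batch of $m$ users — not a mean over the matching sub-batch with the non-matching contribution subtracted — and Claim~\ref{claim:hist} is invoked on that whole batch, after which Hoeffding on the batch composition shows the true yes-fraction is near $1/2$ or near $0$; your phrasing describes a slightly different bookkeeping that arrives at the same threshold test, so the substance is unaffected, and your explicit remark that the $\log\ell$ queries within a pointer step are parallelizable (hence $k$ rounds) makes precise a point the paper leaves implicit.
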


In conjunction with Corollary~\ref{cor:pc_ni_lb}, Theorem~\ref{thm:si_ub} yields an exponential separation in $\ell$ between $(k-1)$- and $k$-round sequentially interactive locally private protocols for $\pcp(k,\ell)$.

\section{Discussion and Open Problems}
Using tools from communication complexity, we have exhibited an exponential sample complexity separation between the sequentially interactive and fully interactive models of local differential privacy. This raises several interesting questions.
\begin{enumerate}
\item It has been known since \citet{KLNRS11} that the query complexity of locally private protocols is polynomially related to the query complexity of algorithms in the statistical query model from learning theory \cite{Kearns98}, for which there are a number of lower bounds. Since query complexity and sample complexity are equivalent for sequentially interactive protocols, and since no separation between the two models was known until very recently \cite{JMNR19}, this was widely viewed as an equivalence that might also hold for sample complexity. Now that we know there is in fact an exponential gap between the sample complexity of fully interactive protocols and sequentially interactive protocols (and hence the query complexity of algorithms in the statistical query model), this again raises the question of the power of locally private learning when protocols are not restricted to be sequentially interactive. In particular, is there an algorithm that can learn parity functions in $d$ dimensions with sample complexity that is sub-exponential in $d$? We conjecture that the answer remains ``no'' but don't know how to prove it.

\item Might there be larger than exponential separations between sequentially and fully interactive locally private protocols? Our results leave this possibility open. To refute it, one would like to demonstrate a reduction that takes as input a fully interactive protocol, and outputs a sequentially interactive protocol inducing the same transcript distribution with ``only'' an exponential increase in sample complexity. \citet{JMNR19} give a reduction in this style that is tight in terms of what they call the ``compositionality'' of the fully interactive protocol, but their reduction does not give any guarantees in terms of $n$.
\end{enumerate}

\newpage

\bibliographystyle{plainnat}
\bibliography{si_fi_2}

\newpage

\section{Appendix}
\subsection{Pseudocode for \fiub}
\label{subsec:fiub_pseudo}
\begin{algorithm}
\caption{\fiub}\label{alg:ub}
\begin{algorithmic}[1]
\Procedure{\fiub}{$\eps, n, \T$}
\State Initialize current node $v \gets$ root node
\State Initialize level $\ell \gets 0$
\State Set $\eps' \gets \eps/2$
\While{$\ell <= 2^{rs}-1$}
	\State Initialize NextNodeFound $\gets 0$
	\State Initialize child index $j \gets 0$
	\While{not NextNodeFound and $j <= 2^{4k}-1$}
		\For{users $i=1, 2, \ldots, n$}
			\State Initialize $b_i \gets 0$
			\If{$x_{i,1} = \ell$ and $(v, v_j) \in x_{i,2}$}
				\State User $i$ sets $b_i \gets 1$
			\EndIf
			\State User $i$ publishes $y_i \sim \Ber{e^{b_i\eps'}/(e^{\eps'}+1)}$
		\EndFor
		\State $\bar y \gets \tfrac{1}{n} \cdot \tfrac{e^{\eps'}+1}{e^{\eps'}-1} \cdot \left(\sum_{i=1}^n y_i - \tfrac{n}{e^{\eps'}+1}\right)$
		\If{$\bar y > 0.2$ or $v_j = v_{2^{4k}-1}$}
			\State Set new current node $v \gets v_j$
			\State NextNodeFound $\gets 1$
		\EndIf
	\EndWhile
	\State $\ell \gets \ell + 1$
\EndWhile
\State Output $v$
\EndProcedure
\end{algorithmic}
\end{algorithm}

\newpage

\subsection{Pseudocode for \siub}
\label{subsec:siub_pseudo}
\begin{algorithm}
\caption{\siub}
\begin{algorithmic}[1]
\Procedure{\siub}{$\eps, k, \ell, m$}
\State Initialize Alice or Bob indicator $z_1 \gets 0$
\State Initialize current location $z_2 \gets 1$
\For{pointer index $j = 1, 2, \ldots, k$}
	\State Initialize next pointer $s \gets 0$
	\For{possible pointer value bits $b = 1, 2, \ldots, \lceil \log(\ell) \rceil$}
		\For{each $i$ of $m$ new users}
			\State Initialize $q_i \gets 0$
			\If{$x_{i,1} = z_1$ and bit $b$ of $x_{i,2}[z_2]$ is $1$}
				\State User $i$ sets $q_i \gets 1$
			\EndIf
			\State User $i$ outputs $y_i \sim \Ber{e^{q_i\eps}/(e^\eps+1)}$
		\EndFor
		\State $\bar y \gets \tfrac{1}{m} \cdot \tfrac{e^\eps+1}{e^\eps-1} \cdot \left(\sum_i y_i - \tfrac{m}{e^\eps+1}\right)$
		\State $s \gets s + 2^{b-2}(\sgn(\bar y - 0.15)+1)$
	\EndFor
	\State $z_1 \gets -z_1 + 1$
	\State $z_2 \gets s$
\EndFor
\State Output $z_2$
\EndProcedure
\end{algorithmic}
\end{algorithm}

\subsection{Proof of Theorem~\ref{thm:si_ub}}
\label{subsec:siub_proof}
\begin{proof}
	First, we prove privacy. Since $\siub$ is sequentially interactive, each user produces a single output. As each output is produced by $\eps$-randomized response, i.e. drawn from either $\Ber{\tfrac{e^\eps}{e^\eps+1}}$ or $\Ber{\tfrac{1}{e^\eps+1}}$, $\siub$ is $\eps$-locally private.
	
	We now prove accuracy. For each pointer index in $[k]$, the protocol computes the next location in $\ell$ bit by bit, using $\lceil \log(\ell) \rceil$ queries, each to a new group of $m$ users. By a similar invocation of Claim~\ref{claim:hist} as in the proof of Theorem~\ref{thm:fi_ub}, given group size $$m > 100\left(\tfrac{\eps + 2}{\eps\sqrt{2}}\right)^2(\ln(k\lceil\log(\ell)\rceil) + \ln(2/\beta)),$$ with probability $1-\beta$ all $k\lceil\log(\ell)\rceil$ queries return the correct bit. $\siub$ thus uses $mk\lceil \log(\ell) \rceil = O\left(\tfrac{k\log(\ell)\log(k\log(\ell))}{\eps^2}\right)$ samples to solve $\pcp(k,\ell)$ with error probability $\leq 1/6$.
\end{proof}

\end{document}